\crefname{section}{Sec.}{Secs.}
\Crefname{section}{Section}{Sections}
\Crefname{table}{Table}{Tables}
\crefname{table}{Tab.}{Tabs.}
\newtheorem{theorem}{Theorem}[subsection]
\newtheorem{lemma}[theorem]{Lemma}
\newtheorem{corollary}[theorem]{Corollary}
\newtheorem{definition}[theorem]{Definition}
\newtheorem{proof}{Proof}[theorem]
\newtheorem{remark}{Remark}[theorem]
\newtheorem{property}{Property}[theorem]
\def\ie{{\em i.e.}}
\def\eg{{\em e.g.}}
\newcommand{\figref}[1]{Fig. \ref{#1}}
\newcommand{\tabref}[1]{Tab. \ref{#1}}
\newcommand{\myPara}[1]{\vspace{.05in}\noindent\textbf{#1}}
\newcommand{\mc}[1]{\mathcal{#1}}
\newcommand{\mb}[1]{\mathbb{#1}}
\newcommand{\tabincell}[2]{\begin{tabular}{@{}#1@{}}#2\end{tabular}}
\def\thanks#1{\protected@xdef\@thanks{\@thanks
        \protect\footnotetext{#1}}}
\def\name{TriDet}
\def\modulename{SGP}
\begin{document}

\title{TriDet: Temporal Action Detection with Relative Boundary Modeling}


\author{Dingfeng Shi$^{\ast}$\thanks{*: This work is done during an internship at JD Explore Academy.
}\\
VRLab, Beihang University, China\\
{\tt\small shidingfeng@buaa.edu.cn}
\and Yujie Zhong\\
Meituan Inc.\\
{\tt\small jaszhong@hotmail.com}
\and
Qiong Cao$^{\dag}$\thanks{$^{\dag}$: Corresponding authors.}\\
JD Explore Academy\\
{\tt\small mathqiong2012@gmail.com}\\
\and Lin Ma\\
Meituan Inc.\\
{\tt\small forest.linma@gmail.com}
\and Jia Li$^{\dag}$\\
VRLab, Beihang University, China\\
{\tt\small jiali@buaa.edu.cn}
\and Dacheng Tao\\
JD Explore Academy\\
{\tt\small dacheng.tao@gmail.com}\\
}

%

\maketitle

\begin{abstract}
In this paper, we present a one-stage framework TriDet for temporal action detection. 
Existing methods often suffer from imprecise boundary predictions due to the ambiguous action boundaries in videos. To alleviate this problem, we propose a novel Trident-head to model the action boundary via an estimated relative probability distribution around the boundary. In the feature pyramid of TriDet, we propose an efficient Scalable-Granularity Perception (SGP) layer to mitigate the rank loss problem of self-attention that takes place in the video features and aggregate information across different temporal granularities. 
Benefiting from the Trident-head and the SGP-based feature pyramid, TriDet achieves state-of-the-art performance on three challenging benchmarks: THUMOS14, HACS and EPIC-KITCHEN 100, with lower computational costs, compared to previous methods. 
For example, TriDet hits an average mAP of $69.3\%$ on THUMOS14, outperforming the previous best by $2.5\%$, but with only $74.6\%$ of its latency. The code is released to \href{https://github.com/dingfengshi/TriDet}{https://github.com/dingfengshi/TriDet}. 
\end{abstract}

\begin{figure}[t]
    \centering{
    \includegraphics[width=0.87\linewidth]{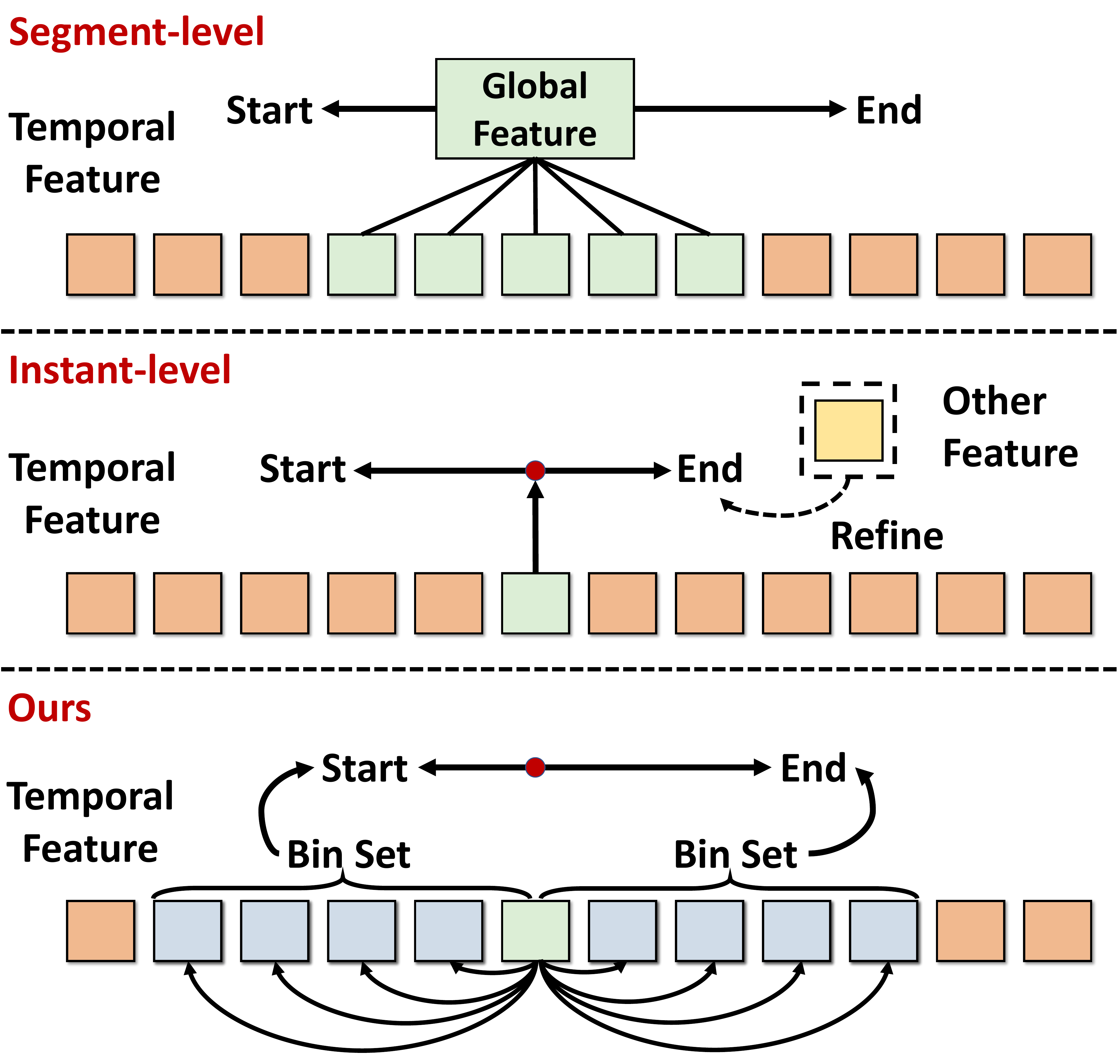}
    
  \caption{Illustration of different boundary modeling. \textbf{Segment-level}: these methods locate the boundaries based on the global feature of a predicted temporal segment. \textbf{Instant-level}: they directly regress the boundaries based on a single instant, potentially with some other features. \textbf{Ours}: the action boundaries are modeled via an estimated relative probability distribution of the boundary.}
  \label{fig:motivation} 
  }
\end{figure}

\section{Introduction}
\label{sec:intro}

Temporal action detection (TAD) aims to detect all start and end instants and corresponding action categories from an untrimmed video, which has received widespread attention. TAD has been significantly improved with the help of the deep learning. 
However, TAD remains to be a very challenging  task due to some unresolved problems.

A critical problem in TAD is that action boundaries are usually not obvious. 
Unlike the situation in object detection where there are usually clear boundaries between the objects and the background, the action boundaries in videos can be fuzzy. A concrete manifestation of this is that the instants (\ie~temporal locations in the video feature sequence) around the boundary have relatively higher predicted response value from the classifier. 


Some previous works attempt to locate the boundaries based on the global feature of a predicted temporal segment~\cite{lin2018bsn,lin2019bmn, zeng2019graph,zhao2020bottom,long2019gaussian},
which may ignore detailed information at each instant.
As another line of work, they directly regress the boundaries based on a single instant~\cite{zhang2022actionformer,paul2018w}, potentially with some other features~\cite{lin2021learning,qing2021temporal,zhao2021video}, which do not consider the relation between adjacent instants (\eg~the relative probability) around the boundary.  
How to effectively utilize boundary information remains an open question. 

\begin{figure}[t]
    \centering{
    \includegraphics[width=0.85\linewidth]{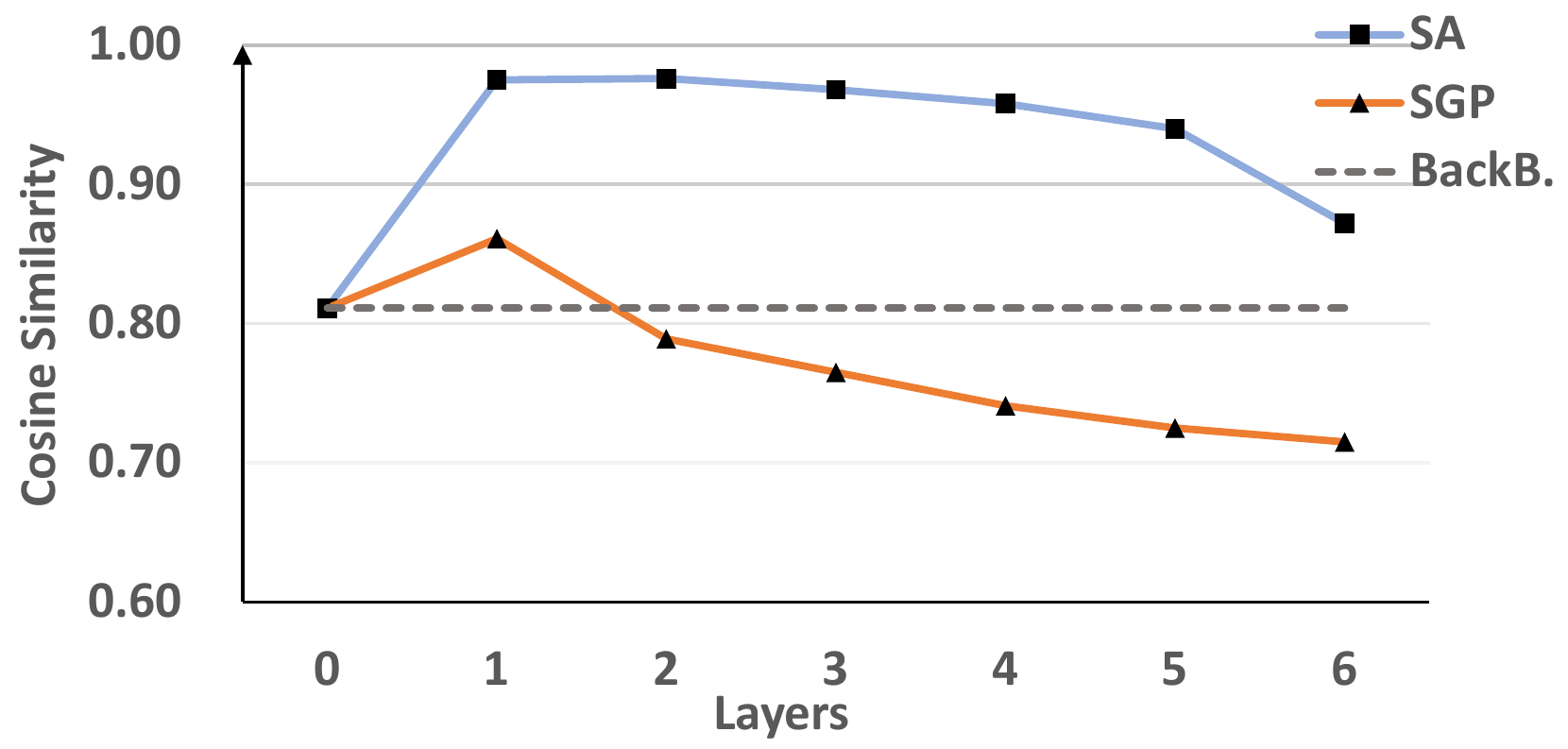}
    
  \caption{Within the HACS dataset and SlowFast backbone, we statistic the average cosine similarity between features at each instant and the video-level average feature for self-attention and SGP, respectively. We observe that the SA exhibits high similarity, indicating poor discriminability (\ie~rank loss problem). In contrast, SGP resolves the issue and exhibits stronger discriminability.
  }
  \label{fig:cosine} 
  }
\end{figure}

To facilitate localization learning, we posit that the relative response intensity of temporal features in a video can mitigate the impact of video feature complexity and increase localization accuracy. 
Motivated by this, we propose a one-stage action detector with a novel detection head named Trident-head tailored for action boundary localization. Specifically, instead of directly predicting the boundary offsets based on the center point feature, the proposed Trident-head models the action boundary via an estimated relative probability distribution of the boundary (see \figref{fig:motivation}). The boundary offset is then computed based on the expected values of neighboring locations (\ie~bins).

Apart from the Trident-head, in this work, the proposed action detector consists of a backbone network and a feature pyramid. 
Recent TAD methods~\cite{zhang2022actionformer,cheng2022tallformer,weng2022efficient} adopt the transformer-based feature pyramid and show promising performance. 
However, the video features of the video backbone tend to exhibit high similarities between snippets, 
which is further deteriorated by SA, leading to the rank loss problem~\cite{dong2021attention} (see \figref{fig:cosine}).
Additionally, SA also incurs significant computational overhead.

Fortunately, we discover that the success of the previous transformer-based layers (in TAD) primarily relies on their macro-architecture, namely, how the normalization layer and feed-forward network (FFN) are connected, rather than the self-attention mechanism.
We therefore propose an efficient convolutional-based layer, termed Scalable-Granularity Perception (SGP) layer, to alleviate the two 
abovementioned problems of self-attention.
SGP comprises two primary branches, which serve to increase the discrimination of features in each instant and capture temporal information with different scales of receptive fields.

The resultant action detector is termed \name. 
Extensive experiments demonstrate that \name~surpasses all the previous detectors and achieves state-of-the-art performance across three challenging benchmarks: THUMOS14, HACS and EPIC-KITCHEN 100. 


\begin{figure*}[t]
    \centering{
    \includegraphics[width=0.92\linewidth]{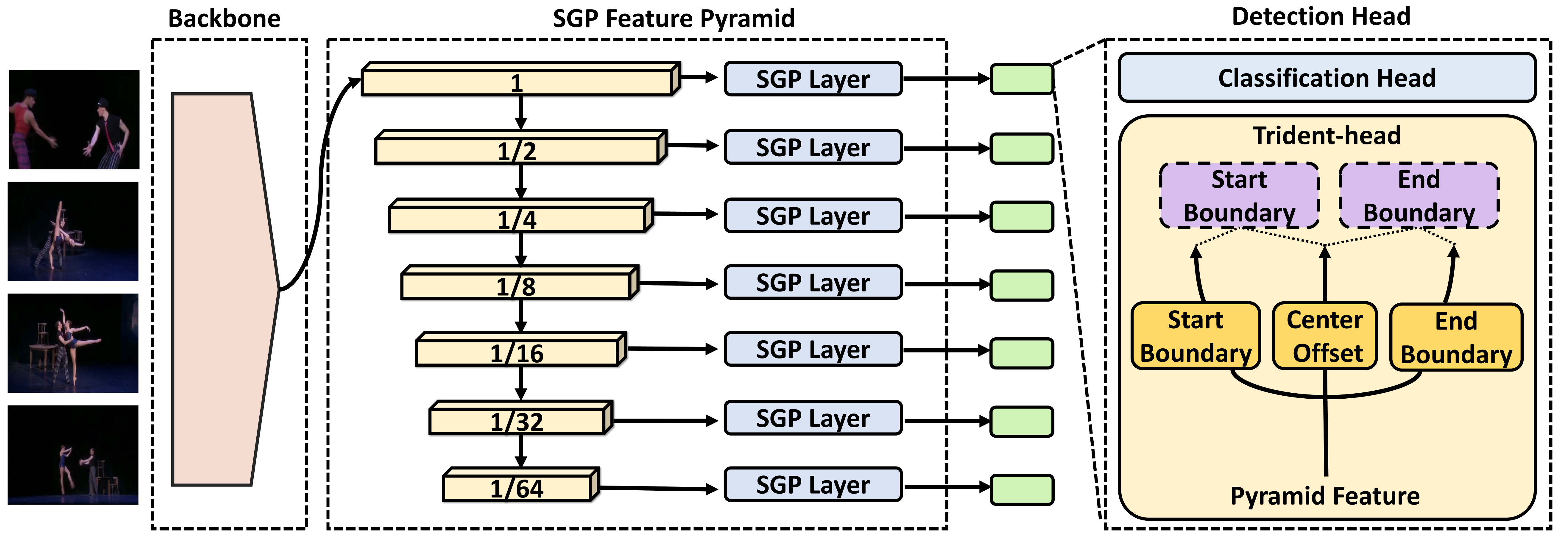}
    
  \caption{Illustration of \name. We build the pyramid features with Scalable-Granularity Perception (SGP) layer. The corresponding features in each level are fed into a shared-weight detection head to obtain the detection result, which consists of a classification head and a Trident-head. The Trident-head estimates the boundary offset based on a relative distribution predicted by three branches: Start Boundary, End Boundary and Center Offset.}
  \label{fig:framework} 
  }
\end{figure*}

\section{Related Work}
\label{sec:related}

\myPara{Temporal action detection.}
Temporal action detection (TAD) involves localizing and classifying all actions from an untrimmed video. The existing methods can be roughly divided into two categories, namely, two-stage methods and one-stage methods.
The two-stage methods~\cite{xu2020g,zeng2019graph,zhu2021enriching,sridhar2021class,  qing2021temporal} split the detection process into two stages: proposal generation and proposal classification. Most of the previous works~\cite{lin2018bsn,lin2019bmn,lin2020fast,chen2022dcan,escorcia2016daps,liu2021multi} put emphasis on the proposal generation phrase. Concretely, some works~\cite{lin2019bmn,lin2018bsn,chen2022dcan} predict the probability of the action boundary and densely match the start and end instants according to the prediction score. Anchor-based methods~\cite{lin2020fast,escorcia2016daps} classify actions from specific anchor windows. However, two-stage methods suffer from a high complexity problem and can not be trained in an end-to-end manner. 
The one-stage methods do the localization and classification with a single network. Some previous works~\cite{yang2020revisiting,lin2021learning,yang2022basictad} build this hierarchical architecture with the convolutional network (CNN).
However, there is still a performance gap between the CNN-based and the latest TAD methods.

\myPara{Object detection.}
Object detection is a twin task of TAD. General Focal Loss~\cite{li2020generalized} transforms bounding box regression from learning Dirac delta distribution to a general distribution function. Some methods~\cite{howard2017mobilenets,chollet2017xception,liu2022convnet} use Depth-wise Convolution to model network structure and some branched designs~\cite{szegedy2017inception,hu2018squeeze} show high generalization ability. They are enlightening for the architecture design of TAD.

\myPara{Transformer-based methods.}
Inspired by the great success of the Transformer in the field of machine translation and object detection, some recent  works~\cite{zhang2022actionformer, shi2022react, tan2021relaxed, cheng2022tallformer,liu2022end,liu2022empirical} adopt the attention mechanism in TAD task, which help improve the detection performance. For example, some works\cite{tan2021relaxed,shi2022react,liu2022end} detect the action with the DETR-like Transformer-based decoder~\cite{carion2020end}, which models action instances as a set of learnable. Other works~\cite{zhang2022actionformer,cheng2022tallformer} extract a video representation with a Transformer-based encoder. However, most of these methods are based on the \emph{local} behavior. Namely, they conduct attention operation only in a local window, which introduces an inductive bias similar to CNN but with a larger computational complexity and additional limitations (\eg~The length of the sequence needs to be pre-padded to an integer multiple of the window size.). 

\section{Method}
\myPara{Problem definition.} We first give a formal definition for TAD task. Specifically, given a set of untrimmed videos $\mc{D}=\{\mc{V}_i\}_{i=1}^{n}$, we have a set of RGB (and optical flow) temporal visual features $X_i=\{{x_t}\}_{t=1}^T$ from each video $\mc{V}_i$, where $T$ corresponds to the number of instants, and $K_i$ segment labels $Y_i=\{s_k,e_k,c_k\}_{k=1}^{K_i}$ with the action segment start instant $s_k$, the end instant $e_k$ and the corresponding action category $c_k$. TAD aims at detecting all segments $Y_i$ based on the input feature $X_i$. 

\subsection{Method Overview}
Our goal is to build a simple and efficient one-stage temporal action detector. 
As shown in \figref{fig:framework}, the overall architecture of \name~consists of three main parts: a video feature backbone, a SGP feature pyramid, and a boundary-oriented Trident-head. 
First, the video features are extracted using a pretrained action classification network (\eg{~I3D~\cite{carreira2017quo} or SlowFast~\cite{feichtenhofer2019slowfast}}). Following that, a SGP feature pyramid is built to tackle actions with various temporal lengths, similar to some recent TAD works~\cite{lin2021learning,zhang2022actionformer,cheng2022tallformer}.
Namely, the temporal features are iteratively downsampled and each scale level is processed with a proposed Scalable-Granularity Perception (\modulename) layer (Section~\ref{sec:efp}) to enhance the interaction between features with different temporal scopes. 
Lastly, action instances are detected by a designed boundary-oriented Trident-head (Section~\ref{sec:head}). 
We elaborate on the proposed modules in the following.

\subsection{Feature Pyramid with \modulename~Layer}
\label{sec:efp}

The feature pyramid is obtained by first downsampling the output features of the video backbone network several times via max-pooling (with a stride of 2). The features at each pyramid level are then processed using transformer-like layers (e.g. ActionFormer~\cite{zhang2022actionformer}). 

Current Transformer-based methods for TAD tasks primarily rely on the macro-architecture of the Transformer (See supplementary material for details), rather than the self-attention mechanisms. Specifically, SA mainly encounters two issues: the rank loss problem across the temporal dimension and its high computational overhead.

\paragraph{Limitation 1: the rank loss problem.}
The rank loss problem arises because the probability matrix in self-attention (\ie~softmax($QK^T$)) is \emph{non-negative} and \emph{the sum of each row is 1}, indicating the outputs of SA are \emph{convex combination} for the value feature $V$. Considering that pure Layer Normalization~\cite{ba2016layer} projects feature onto the unit hyper-sphere in high-dimensional space, we analyze the degree of their distinguishability by studying the maximum angle between features within the instant features. We demonstrate that the maximum angle between features after the \emph{convex combination} is less than or equal to that of the input features, resulting in increasing similarity between features (as outlined in the supplementary material), which can be detrimental to TAD.

\paragraph{Limitation 2: high computational complexity.}
In addition, the dense pair-wise calculation (between instant features) in self-attention brings a high computational overhead and therefore decreases the inference speed.

\begin{figure}[t]
    \centering{
    \includegraphics[width=0.87\linewidth]{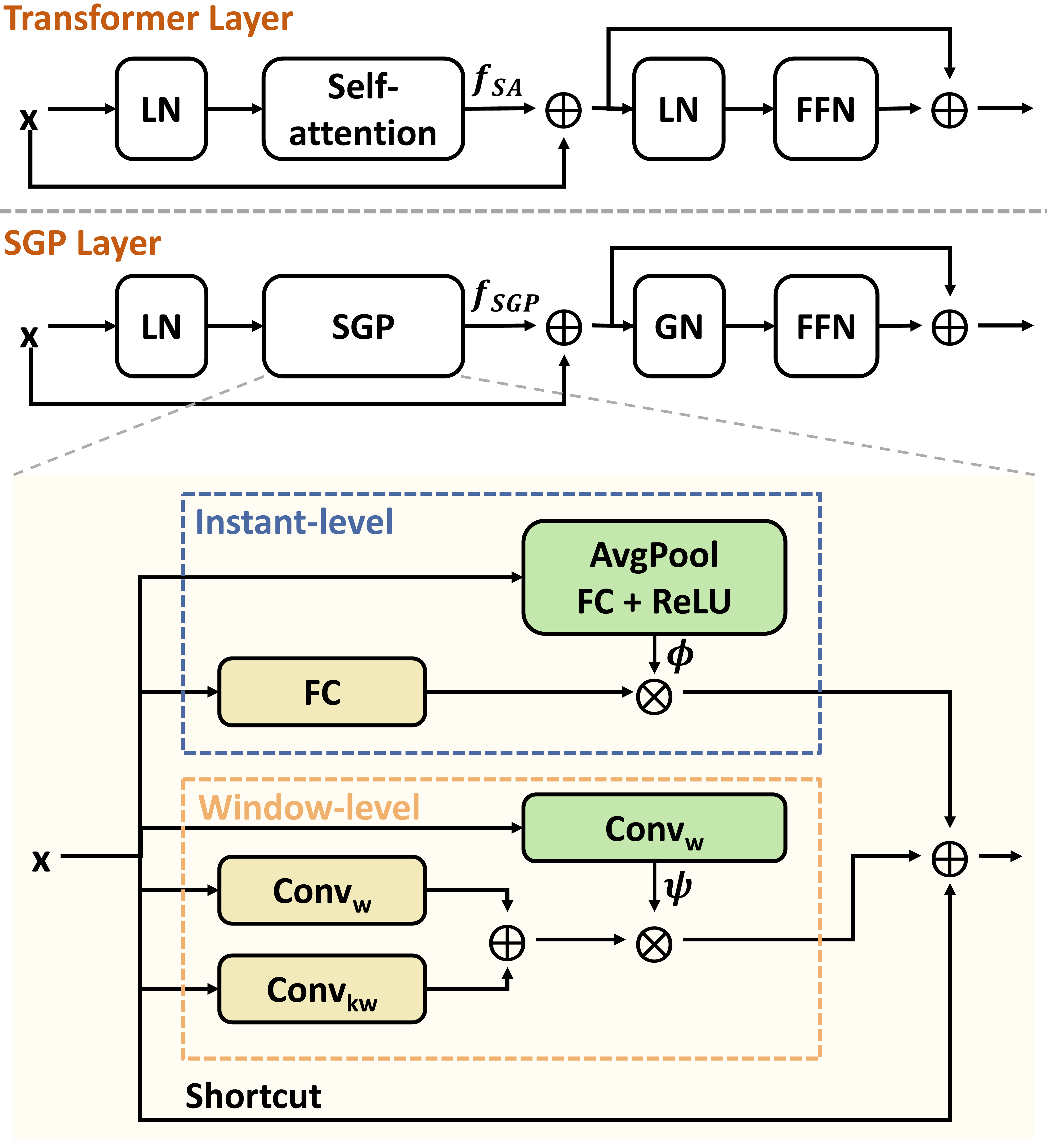}
    
  \caption{Illustration of the structure of SGP layer. We replace the self-attention and the second Layer Normalization (LN) with SGP and Group Normalization (GN), respectively. }
  \label{fig:module}
  }
\end{figure}

\paragraph{The SGP layer.}
Based on the above discovery, we propose a Scalable-Granularity Perception (\modulename) layer to effectively capture the action information and suppress rank loss. The major difference between the Transformer layer and SGP layer is the replacement of the self-attention module with the fully-convolutional module SGP. The successive Layer Normalization\cite{ba2016layer} (LN) is changed to Group Normalization\cite{wu2018group} (GN).

As shown in \figref{fig:module}, \modulename~contains two main branches: an instant-level branch and a window-level branch. 
In the instant-level branch, we aim to increase the feature discriminability between action and non-action instant by enlarging their feature distance with the video-level average feature. The window-level branch is designed to introduce the semantic content from a wider receptive field with a branch $\psi$ to help dynamically focus on the features of which scale.
Mathematically, the \modulename~can be written as:
\begin{equation}
    f_{SGP} = \phi(x)FC(x) + \psi(x)(Conv_w(x) + Conv_{kw}(x))+ x,
\end{equation}
where $FC$ and $Conv_w$ denotes fully-connected layer and the 1-D depth-wise convolution layer~\cite{chollet2017xception} over temporal dimension with window size $w$.
As a signature design of \modulename, $k$ is a scalable factor aiming at capturing a larger granularity of temporal information.
The video-level average feature $\phi(x)$ and branch $\psi(x)$ are given as
\begin{align}
    \phi(x) &= ReLU(FC(AvgPool(x))),\\
    \psi(x) &= Conv_w(x),
\end{align}
where $AvgPool(x)$ is the average pooling for all features over the temporal dimension. Here, both $\phi(x)$ and $\psi(x)$ perform the element-wise multiplication with the mainstream feature.

The resultant \modulename-based feature pyramid can achieve better performance than the transformer-based feature pyramid while being much more efficient.

\subsection{Trident-head with Relative Boundary Modeling}
\label{sec:head}


\paragraph{Intrinsic property of action boundaries.}
Regarding the detection head, some existing methods directly regress the temporal length~\cite{zhang2022actionformer} of the action at each instant of the feature and refine with the boundary feature\cite{lin2021learning,qing2021temporal}, or ~\cite{lin2018bsn,lin2019bmn,zeng2019graph} simply predict an \emph{actionness} score (indicating the probability of being an action). These simple strategies suffer from a problem in practice: imprecise boundary predictions, due to the intrinsic property of actions in videos. Namely, the boundaries of actions are usually not obvious, unlike the boundaries of objects in object detection. Intuitively, a more statistical boundary localization method can reduce uncertainty and facilitate more precise boundaries.


\begin{figure}[t]
    \centering{
    \includegraphics[width=0.90\linewidth]{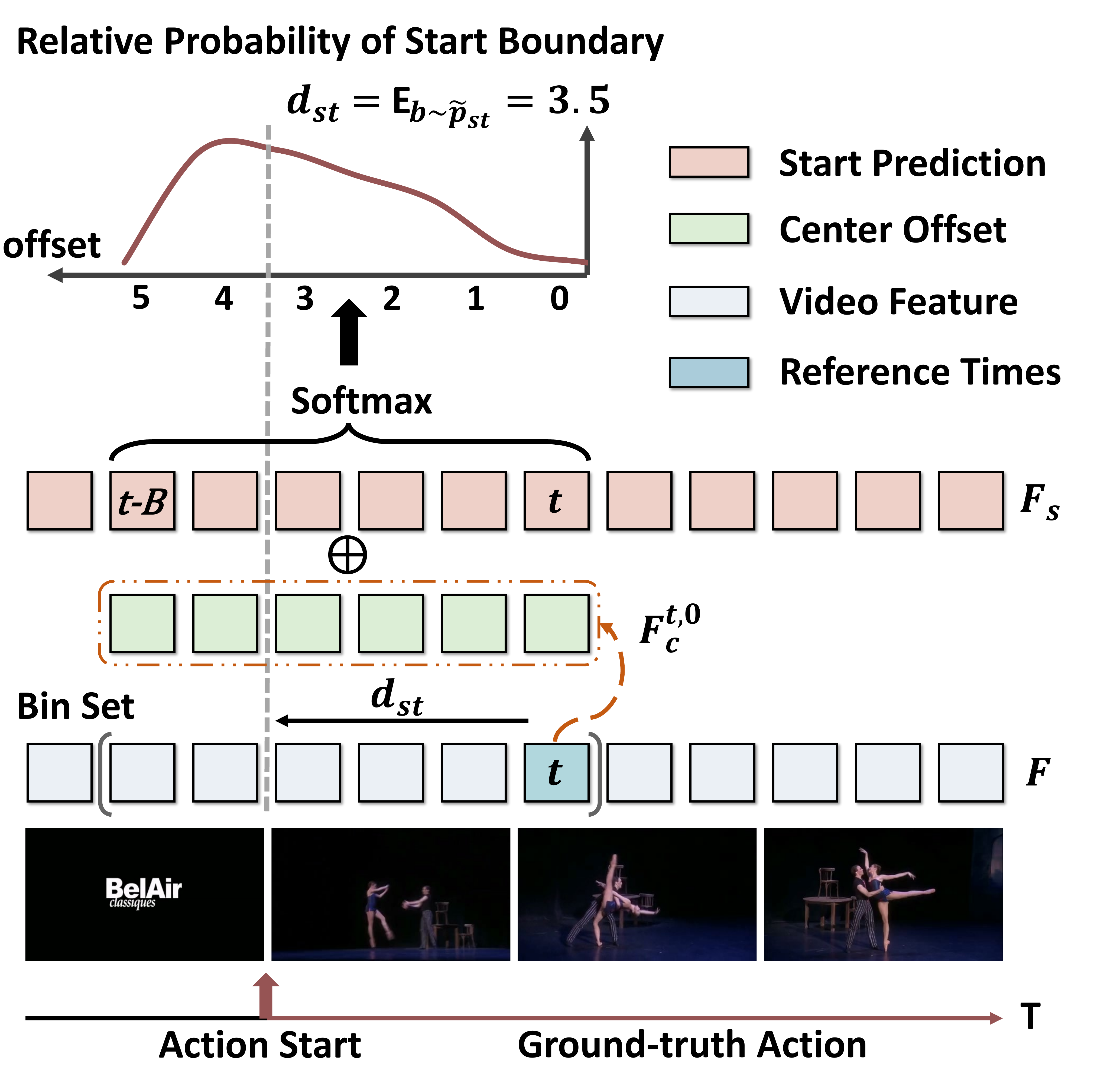}
    
  \caption{The boundary localization mechanism of Trident-head. We predict the boundary response and the center offset for each instant. At the instant t, the predicted boundary response in neighboring bin set is summed element-wise with the center offset corresponding to the instant t, which is further estimated as the relative boundary distribution.
  Finally, the offset is computed based on the expected value of the bin.
  }
  \label{fig:head} 
  }
\end{figure}



\paragraph{Trident-head.}
In this work, we propose a boundary-oriented Trident-head to precisely locate the action boundaries based on the relative boundary modeling, \ie~considering the relation of features in a certain period and obtaining the relative probability of being a boundary for each instant in that period. The Trident-head consists of three components: a start head, an end head, and a center-offset head, which are designed to locate the start boundary, end boundary, and the temporal center of the action, respectively. The Trident-head can be trained end-to-end with the detector. 

Concretely, as shown in \figref{fig:head}, given a sequence of features $F\in\mc{R}^{T \times D}$ output from the feature pyramid, we first obtain three feature sequences from the three branches (namely, $F_{s} \in \mc{R}^{T}$, $F_{e} \in \mc{R}^{T}$ and $F_{c} \in \mc{R}^{T \times 2 \times (B+1)}$), where $B$ is the number of bins for boundary prediction, $F_{s}$ and $F_{e}$ characterize the response value for each instant as the starting or ending point of an action, respectively. In addition, the center-offset head aims at estimating two conditional distributions $P(b_{st}|t)$ and $P(b_{et}|t)$. They represent the probability that each instant (in its set of bins) serves as a boundary when the instant $t$ is the midpoint of an action. Then, we model the boundary distance by combining the outputs of the boundary head and center-offset head: 
\begin{align}
    \widetilde{P}_{st} &= Softmax(F_{s}^{[(t-B):t]} + F_{c}^{t,0}),\\
    d_{st} &= \mb{E}_{b\sim \widetilde{P}_{st}}[b] \approx \sum_{b=0}^B(b\widetilde{P}_{stb}),
\end{align}
where $F_{s}^{[(t-B):t]} \in \mc{R}^{B+1}$, $F_{c}^{t,0} \in \mc{R}^{B+1}$ are the feature of the left adjacent bin set of instant $t$ and the center offsets predicted by instant $t$ only, respectively, and $\widetilde{P}_{st}$ is the \emph{relative probability} which represents the probability of each instant as a start of the action within the bin set. Then, the distance between the instant $t$ and the start instant of action instance $d_{st}$ is given by the expectation of the adjacent bin set. Similarly, the offset distance of the end boundary $d_{et}$ can be obtained by
\begin{align}
    \widetilde{P}_{et} &= Softmax(F_{e}^{[t:(t+B)]} + F_{c}^{t,1}),\\
    d_{et} &= \mb{E}_{b\sim \widetilde{P}_{et}}[b] \approx \sum_{b=0}^B(b\widetilde{P}_{etb\textbf{}})
\end{align}

All heads are simply modeled in three layers convolutional networks and share parameters at all feature pyramid levels to reduce the number of parameters.

\myPara{Combination with feature pyramid.} 
We apply the Trident-head in a pre-defined local bin set, which can be further improved by combining it with the feature pyramid.
In this setting, features at each level of the feature pyramid simply share the same small number of bins $B$ (\eg~16) and then the corresponding prediction for each level $l$ can be scaled by $2^{l-1}$, which can significantly help to stabilize the training process. 

Formally, for an instant in the l-th feature level $t^l$,  \name~estimates the boundary distance $\hat{d}_{st}^l$ and $\hat{d}_{et}^l$ with the Trident-head described above, then the segments $a=(\hat{s}_t, \hat{e}_t)$ can be decoded by 
\begin{align}
    \hat{s}_t &= (t-\hat{d}_{st}^l)\times 2^{l-1},\\
    \hat{e}_t &= (t+\hat{d}_{et}^l)\times 2^{l-1}.
\end{align}

\myPara{Comparison with existing methods that have explicit boundary modeling.} Many previous methods improve boundary predictions. We divide them into two broad categories: the prediction based on sampling instants in segments~\cite{lin2019bmn,liu2022end,shi2022react} and the prediction based on a single instant. The first category predicts the boundary 
according to the global feature of the predicted instance segments.
They only consider global information instead of detailed information at each instant. The second category directly predicts the distance between an instant and its corresponding boundary based on the instant-level feature~\cite{lin2021learning,zhang2022actionformer,zhao2021video,qing2021temporal}. Some of them refine the segment with boundary features~\cite{lin2021learning,qing2021temporal,zhao2021video}. 
However, they do not take the relation (\ie~relative probability of being a boundary) of adjacent instants into account.
The proposed Trident-head differs from these two categories and shows superior performance in precise boundary localization.

\subsection{Training and Inference}
Each layer $l$ of the feature pyramid outputs a temporal feature $F^l \in \mc{R}^{(2^{l-1}T)\times D}$, which is then fed to the classification head and the Trident-head for action instance detection. The output of each instant $t$ in feature pyramid layer $l$ is denoted as $\hat{o}_{t}^l = (\hat{c}_{t}^l, \hat{d}_{st}^l, \hat{d}_{et}^l)$. 

The overall loss function is then defined as follows:
\begin{align}
\begin{aligned}
\mc{L}&=\frac{1}{N_{pos}}\sum_{l,t}\mathbbm{1}_{\{c^l_t>0\}}(\sigma_{IoU}\mc{L}_{cls} + \mc{L}_{reg})\\
&+ \frac{1}{N_{neg}}\sum_{l,t}\mathbbm{1}_{\{c^l_t=0\}}\mc{L}_{cls},
\end{aligned}
\end{align}
where $\sigma_{IoU}$ is the temporal IoU between the predicted segment and the ground truth action instance, and $\mc{L}_{cls}$, $\mc{L}_{reg}$ is focal loss~\cite{lin2017focal} and IoU loss~\cite{rezatofighi2019generalized}. $N_{pos}$ and $N_{neg}$ denote the number of positive and negative samples.
The term $\sigma_{IoU}$ is used to reweight the classification loss at each instant, such that instants with better regression (\ie~of higher quality) contribute more to the training. 
Following previous methods~\cite{tian2019fcos,zhang2020bridging,zhang2022actionformer}, center sampling is adopted to determine the positive samples. Namely, the instants around the center of an action instance are labeled as positive and all the others are considered as negative.

\myPara{Inference.} At inference time, the instants with classification scores higher than threshold $\lambda$ and their corresponding instances are kept. Lastly, Soft-NMS~\cite{bodla2017soft} is applied for the deduplication of predicted instances.

\section{Experiments}
\label{sec:exp}
\myPara{Datasets.}
We conduct experiments on four challenging datasets: THUMOS14~\cite{THUMOS14}, ActivityNet-1.3\cite{caba2015activitynet}, HACS-Segment~\cite{zhao2019hacs} and EPIC-KITCHEN 100~\cite{Damen2022RESCALING}. THUMOS14 consists of 20 sport action classes and it contains 200 and 213 untrimmed videos with 3,007 and 3,358 action instances on the training set and test set, respectively. ActivityNet and HACS are two large-scale datasets and they share 200 classes of action. They have 10,024 and 37,613 videos for training, as well as 4,926 and 5,981 videos for test. The EPIC-KITCHEN 100 is a large-scale dataset in first-person vision, which have two sub-tasks: \emph{noun} localization (\eg~door) and \emph{verb} localization (\eg~open the door). It contains 495 and 138 videos with 67,217 and 9,668 action instances for training and test, respectively. The number of action classes for \emph{noun} and \emph{verb} are 300 and 97.  

\myPara{Evaluation.} For all these datasets, only the annotations of the training and validation sets are accessible. Following the previous practice~\cite{lin2019bmn, zhang2022actionformer,cheng2022tallformer,zeng2019graph}, we evaluate on the validation set. We report the mean average precision (mAP) at different intersection over union (IoU) thresholds. For THUMOS14 and EPIC-KITCHEN, we report the IoU thresholds at [0.3:0.7:0.1] and [0.1:0.5:0.1] respectively. For ActivityNet and HACS, we report the result at IoU threshold [0.5, 0.75, 0.95] and the avearge mAP is computed at [0.5:0.95:0.05]. 

\begin{table}[t]
\centering{
\caption{Comparison with the state-of-the-art methods on THUMOS14 dataset. *: TSN backbone. $\dagger$: Swin Transformer backbone. Others: I3D backbone.}
\label{table:thumos14}
\setlength{\tabcolsep}{1.3mm}
\scalebox{0.9}{
\begin{tabular}{c|c c c c c c }
\toprule
Method & 0.3 & 0.4 & 0.5 & 0.6 & 0.7 & Avg.\\
\midrule
 BMN~\cite{lin2019bmn}* & 56.0 & 47.4 & 38.8 & 29.7 & 20.5 & 38.5 \\
 G-TAD~\cite{xu2020g}*  & 54.5 & 47.6 & 40.3 & 30.8 & 23.4 & 39.3 \\
 A2Net~\cite{yang2020revisiting} & 58.6 & 54.1 & 45.5 & 32.5 & 17.2 & 41.6 \\
 TCANet~\cite{qing2021temporal}* & 60.6 & 53.2 & 44.6 & 36.8 & 26.7 & 44.3 \\
 RTD-Net~\cite{tan2021relaxed} & 68.3 & 62.3 & 51.9 & 38.8 & 23.7 & 49.0 \\
 VSGN~\cite{zhao2021video}* & 66.7 & 60.4 & 52.4 & 41.0 & 30.4 & 50.2 \\
 ContextLoc~\cite{zhu2021enriching} & 68.3 & 63.8 & 54.3 & 41.8 & 26.2 & 50.9\\
 AFSD~\cite{lin2021learning} & 67.3 & 62.4 & 55.5 & 43.7 & 31.1 & 52.0 \\
 ReAct~\cite{shi2022react}* & 69.2 & 65.0 & 57.1 & 47.8 & 35.6 & 55.0 \\
 TadTR~\cite{liu2022end} & 74.8 & 69.1 & 60.1 & 46.6 & 32.8 & 56.7\\
TALLFormer~\cite{cheng2022tallformer}$\dagger$ & 76.0 & - & 63.2 & - & 34.5 & 59.2\\
 ActionFormer~\cite{zhang2022actionformer} & 82.1 & 77.8 & 71.0 & 59.4 & 43.9 & 66.8\\
\textbf{TriDet} & \textbf{83.6} & \textbf{80.1} & \textbf{72.9} & \textbf{62.4} & \textbf{47.4} & \textbf{69.3}\\
\bottomrule
\end{tabular}
}}
\end{table}

\begin{table}[t]
\centering{
\caption{Comparison with the state-of-the-art methods on HACS dataset.}
\label{table:hacs}
\setlength{\tabcolsep}{1.3mm}
\renewcommand{\arraystretch}{1.1}
\scalebox{0.9}{
\begin{tabular}{c|c |c  c c c c }
\toprule
Method & Backbone & 0.5 & 0.75 & 0.95 & Avg. \\
\midrule
SSN~\cite{SSN2017ICCV} & I3D & 28.8 & 18.8 & 5.3 &  19.0\\
LoFi~\cite{xu2021low} & TSM & 37.8 & 24.4 & 7.3 & 24.6\\
G-TAD~\cite{xu2020g} & I3D & 41.1 & 27.6 & 8.3 &  27.5\\
TadTR~\cite{liu2022end} & I3D & 47.1 & 32.1 & 10.9 & 32.1\\
BMN~\cite{lin2019bmn} & SlowFast &  52.5 & 36.4 & 10.4 & 35.8\\
TALLFormer~\cite{cheng2022tallformer} & Swin &  55.0 & 36.1 & 11.8 & 36.5\\
TCANet~\cite{qing2021temporal} & SlowFast &  54.1 & 37.2 & 11.3 & 36.8\\
\textbf{TriDet} & I3D & \textbf{54.5} & \textbf{36.8} & \textbf{11.5} & \textbf{36.8}\\
\textbf{TriDet} & SlowFast & \textbf{56.7} & \textbf{39.3} & \textbf{11.7} & \textbf{38.6}\\
\bottomrule
\end{tabular}}
}
\end{table}

\subsection{Implementation Details}
TriDet is trained end-to-end with AdamW~\cite{loshchilov2018decoupled}
optimizer. The initial learning rate is set to $10^{-4}$ for THUMOS14 and EPIC-KITCHEN, and  $10^{-3}$ for ActivityNet and HACS. We detach the gradient before the start boundary head and end boundary head and initialize the CNN weights of these two heads with a Gaussian distribution $\mc{N}(0, 0.1)$ to stabilize the training process. The learning rate is updated with Cosine Annealing schedule~\cite{loshchilov2017sgdr}. We train 40, 23, 19, 15 and 13 epochs for THUMOS14, EPIC-KITCHEN \textbf{verb}, EPIC-KITCHEN \emph{noun}, ActivityNet and HACS (containing warmup 20, 5, 5, 10, 10 epochs).

For ActivityNet and HACS, the number of bins $B$ of the Trident-head is set to 12, 14 and the convoluntion window $w$ is set to 15, 11 and the scale factor $k$ is set to 1.3 and 1.0, respectively. 
For THUMOS14 and EPIC-KITCHEN, the number of bins $B$ of the Trident-head is set to 16 and the convoluntion window $w$ is set to 1 and the scale factor $k$ is set to 1.5. We round the scaled windows size and take it up to the nearest odd number for convenience. We conduct our experiments on a single NVIDIA A100 GPU.

\begin{table}[t]
\centering{
\caption{Comparison with the state-of-the-art methods on EPIC-KITCHEN dataset. \emph{V.} and \emph{N.} denote the \emph{verb} and \emph{noun} sub-tasks, respectively.}
\label{table:epic}
\setlength{\tabcolsep}{0.9mm}
\renewcommand{\arraystretch}{1.1}
\scalebox{0.9}{
\begin{tabular}{c|c|c c c c c c}
\toprule
 &Method & 0.1 & 0.2 & 0.3 & 0.4 & 0.5 &Avg. \\
\midrule
\multirow{4}*{\tabincell{c}{\emph{V.}}}
&BMN~\cite{lin2019bmn}& 10.8 & 8.8 & 8.4 & 7.1& 5.6 & 8.4 \\
&G-TAD~\cite{xu2020g}& 12.1 & 11.0 & 9.4 & 8.1 & 6.5 & 9.4 \\
&ActionFormer~\cite{zhang2022actionformer}& 26.6 & 25.4 & 24.2 & 22.3 & 19.1 & 23.5 \\
&\textbf{TriDet}&\textbf{ 28.6}  & \textbf{27.4}  & \textbf{26.1} & \textbf{24.2}  & \textbf{20.8}  & \textbf{25.4} \\
\midrule
\multirow{4}*{\tabincell{c}{\emph{N.}}}
&BMN ~\cite{lin2019bmn}& 10.3 & 8.3 & 6.2 & 4.5 & 3.4 & 6.5 \\
&G-TAD~\cite{xu2020g}& 11.0 & 10.0 & 8.6 & 7.0 & 5.4 & 8.4 \\
&ActionFormer~\cite{zhang2022actionformer}& 25.2 & 24.1 & 22.7 & 20.5 & 17.0 & 21.9 \\
&\textbf{TriDet}& \textbf{27.4}  & \textbf{26.3}  & \textbf{24.6} & \textbf{22.2}  & \textbf{18.3}  & \textbf{23.8} \\
\bottomrule
\end{tabular}
}}
\end{table}
\subsection{Main Results}

\myPara{THUMOS14.} We adopt the commonly used I3D\cite{carreira2017quo} as our backbone feature and \tabref{table:thumos14} presents the results. Our method achieves an average mAP of $69.3\%$, outperforming all previous methods including one-stage and two-stage methods. Notably, our method also achieves better performance than recent Transformer-based methods~\cite{zhang2022actionformer,cheng2022tallformer,shi2022react,liu2022end,qing2021temporal}, which demonstrates that the simple design can also have impressive results. 

\myPara{HACS.} For the HACS-segment dataset, we conduct experiments based on two commonly used features: the official I3D\cite{carreira2017quo} feature and the SlowFast~\cite{feichtenhofer2019slowfast} feature. As shown in \tabref{table:hacs}, our method achieves an average mAP of $36.8\%$ with the official features. It is the state-of-the-art and outperforms the previous best model TadTR by about $4.7\%$ in average mAP. We also show that changing the backbone to SlowFast can further boost performance, resulting in a $1.8\%$ increase in average mAP, which indicates that our method can benefit from a much more advanced backbone network.

\myPara{EPIC-KITCHEN.} On this dataset, following all previous methods, SlowFast is adopted as the backbone feature. The method of our main comparison is ActionFormer\cite{zhang2022actionformer}, which has demonstrated promising performance in EPIC-KITCHEN 100 dataset. We present the results in~\tabref{table:epic}. Our method shows a significant improvement in both sub-tasks: \emph{verb} and \emph{noun}, and achieves $25.4\%$ and $23.8\%$ average mAP, respectively. Note that our method outperforms ActionFormer with the same features by a large margin ($1.9\%$ and $1.9\%$ average mAP in \emph{verb} and \emph{noun}, respectively). Moreover, our method achieves state-of-the-art performance on this challenging dataset. 


\begin{table}[t]
\centering{
\caption{Comparison with the state-of-the-art methods on ActivityNet-1.3 dataset.}
\label{table:activitynet}
\setlength{\tabcolsep}{1.3mm}
\renewcommand{\arraystretch}{1.1}
\scalebox{0.9}{
\begin{tabular}{c| c | c c c c c }
\toprule
Method & Backbone& 0.5 & 0.75 & 0.95 & Avg. \\
\midrule
PGCN~\cite{zeng2019graph} & I3D & 48.3 & 33.2 & 3.3 & 31.1\\
ReAct~\cite{shi2022react} & TSN & 49.6 & 33.0 & 8.6 & 32.6\\
BMN~\cite{lin2019bmn}& TSN & 50.1 & 34.8 & 8.3 & 33.9 \\
G-TAD~\cite{xu2020g}& TSN & 50.4 & 34.6 & 9.0 & 34.1\\
AFSD~\cite{lin2021learning} & I3D & 52.4 & 35.2 & 6.5 & 34.3\\
TadTR~\cite{liu2022end} & TSN & 51.3  & 35.0  & 9.5  & 34.6\\
TadTR~\cite{liu2022end} & R(2+1)D & 53.6  & 37.5  & 10.5  & 36.8\\
VSGN~\cite{zhao2021video} & I3D & 52.3 & 35.2 & 8.3 & 34.7\\
PBRNet~\cite{liu2020progressive} & I3D & 54.0 & 35.0 & 9.0 & 35.0\\
TCANet+BMN~\cite{qing2021temporal} & TSN & 52.3 & 36.7 & 6.9 & 35.5\\
TCANet+BMN~\cite{qing2021temporal} & SlowFast & 54.3 & \textbf{39.1} & \textbf{8.4} & \textbf{37.6}\\
TALLFormer~\cite{cheng2022tallformer} & Swin & 54.1 & 36.2 & 7.9 & 35.6\\
ActionFormer~\cite{zhang2022actionformer} & R(2+1)D & \textbf{54.7} & 37.8 & \textbf{8.4} & 36.6\\
\textbf{TriDet} & R(2+1)D &\textbf{54.7} & \textbf{38.0} & \textbf{8.4} & \textbf{36.8}\\
\bottomrule
\end{tabular}
}}
\end{table}

\myPara{ActivityNet.} For the ActivityNet v1.3 dataset, we adopt the TSP R(2+1)D~\cite{alwassel2021tsp} as our backbone feature. Following previous methods\cite{zhang2022actionformer,cheng2022tallformer, qing2021temporal, liu2022end,lin2021learning}, the video classification score predicted from the UntrimmedNet is adopted to multiply with the final detection score. \tabref{table:activitynet} presents the results. Our method still shows a promising result: \name~outperform the second best model~\cite{zhang2022actionformer} with the same feature, only worse than TCANet~\cite{qing2021temporal} which is a two-stage method and using the SlowFast as the backbone feature which is not available now.

\subsection{Ablation Study}
In this section, we mainly conduct the ablation studies on the THUMOS14 dataset.

\myPara{Main components analysis.}
We demonstrate the effectiveness of our proposed components in \name: SGP layer and Trident-head. To verify the effectiveness of our SGP layer, we use a baseline feature pyramid used by~\cite{lin2021learning, zhang2022actionformer} to replace our SGP layer. The baseline consists of two 1D-convolutional layers and shortcut. The window size of convolutional layers is set to 3 and the number of channels of the intermediate feature is set to the same dimension as the intermediate dimension in the FFN in our SGP layer. All other hyperparameters (\eg~number of the pyramid layers, etc.) are set to the same as our framework.

As depicted in \tabref{table:ablation}, compared with the baseline model we implement (Row 1), the SGP layer brings a $6.2\%$ absolute improvement in the average mAP. Secondly, we compare the SGP with the previous state-of-the-art method, ActionFormer, which adopts a self-attention mechanism in a sliding window behavior\cite{beltagy2020longformer} with window size $7$ (Row 2). We can see our SGP layer still has $1.5\%$ improvement in average mAP, demonstrating that the convolutional network can also have excellent performance in TAD task. Besides, we compare our Trident-head with the normal instant-level regression head, which regresses the boundary distance for each instant. We can see that the Trident-head improves the average mAP by $1.0\%$, and the mAP improvement is more obvious in the case of high IoU threshold (\eg~$1.6\%$ average mAP improvement in IoU 0.7).

\begin{table}[t]
\centering{
\caption{Analysis of the Effectiveness of three main components on THUMOS14.}
\label{table:ablation}
\setlength{\tabcolsep}{1.0mm}
\renewcommand{\arraystretch}{1.2}
\scalebox{0.9}{
\begin{tabular}{c| c  c  c| c c c c}
\toprule
Method & SA & SGP & Trident & 0.3 & 0.5 & 0.7 & Avg. \\
\midrule
1 & &  &  &  77.3 & 65.2 & 40.0 & 62.1 \\
2 & $\surd$  &  & & 82.1 & 71.0 & 43.9 & 66.8 \\
3 & & $\surd$ & & \textbf{83.6} & 71.7 & 45.8 & 68.3 \\
4 & & $\surd$ & $\surd$  &\textbf{ 83.6}  & \textbf{72.9} & \textbf{47.4} & \textbf{69.3} \\
\bottomrule
\end{tabular}
}}
\end{table}

\begin{table}[t]
\centering{
\caption{Analysis of computation cost on THUMOS14. Main: All parts of the model except the detection head. *: Our method with a normal instant-level regression head.}
\label{table:time}
\setlength{\tabcolsep}{0.9mm}
\renewcommand{\arraystretch}{1.3}
\scalebox{0.9}{
\begin{tabular}{c|c c c | c c c | c}
\toprule
\multirow{2}*{\tabincell{c}{Method}} & \multicolumn{3}{c|}{mAP} & \multicolumn{3}{c|}{GMACs} & \multirow{2}*{\tabincell{c}{Latency\\
(ms)}}\\
 & 0.3 & 0.7 & Avg. & Main & Head & All\\
\midrule
ActionFormer & 82.1 & 43.9 & 66.8 & 30.8 & \textbf{14.4} & 45.3 & 224\\
\textbf{TriDet$^*$} & \textbf{83.6} & 45.8 & 68.3 & \textbf{14.5} & \textbf{14.4} & \textbf{28.9} & \textbf{145}\\
\textbf{TriDet} & \textbf{83.6} & \textbf{47.4} & \textbf{69.3} & \textbf{14.5} & 29.1 & 43.7 & 167\\
\bottomrule
\end{tabular}
}}
\end{table}


\begin{figure}[t]
    \centering

    \subcaptionbox{The choice of w (k=1)\label{fig:w}}{
        \includegraphics[width=0.8\linewidth]{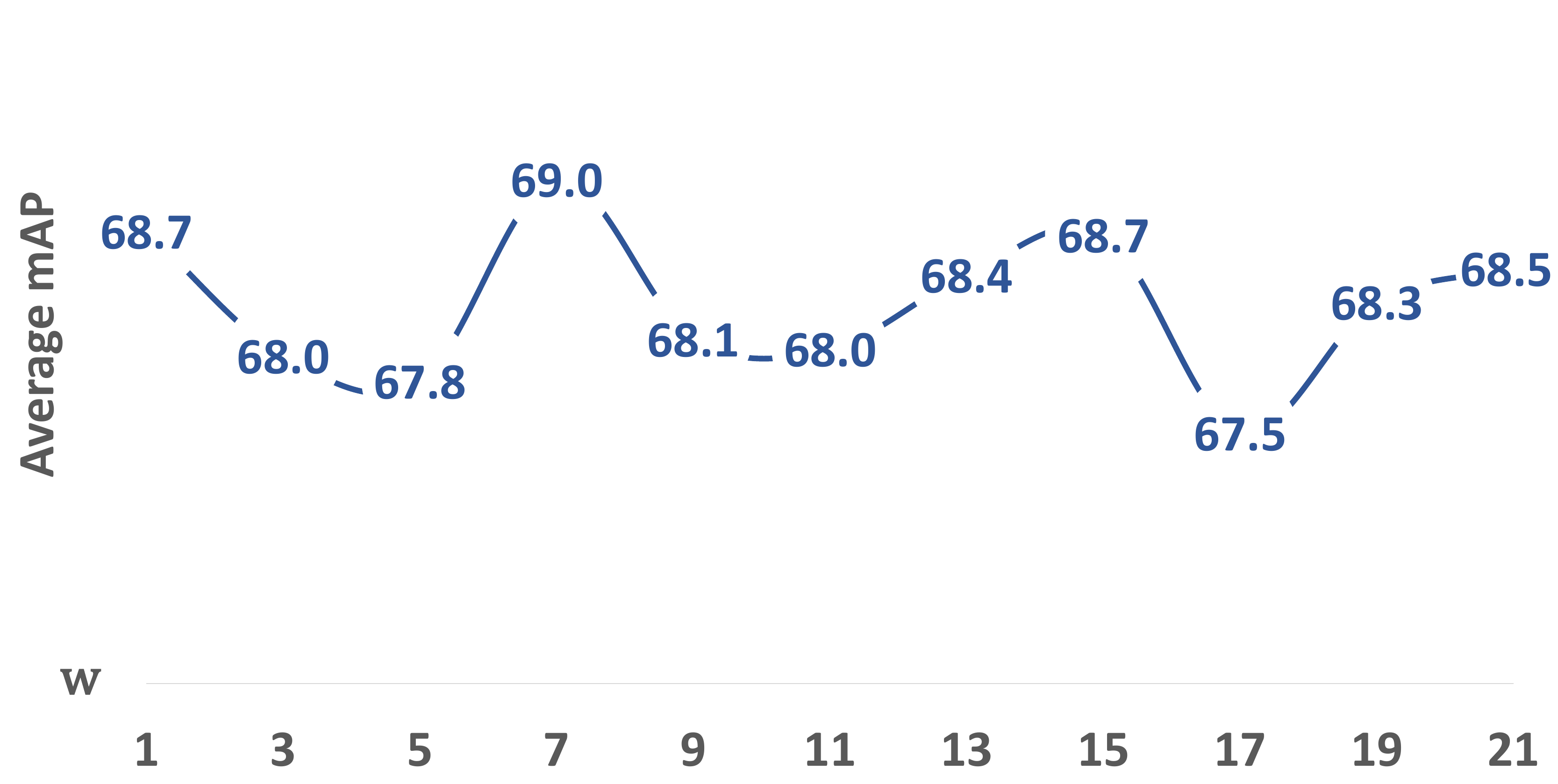}
    }
    
    \subcaptionbox{The choice of k (w=1)\label{fig:k}}{
        \includegraphics[width=0.8\linewidth]{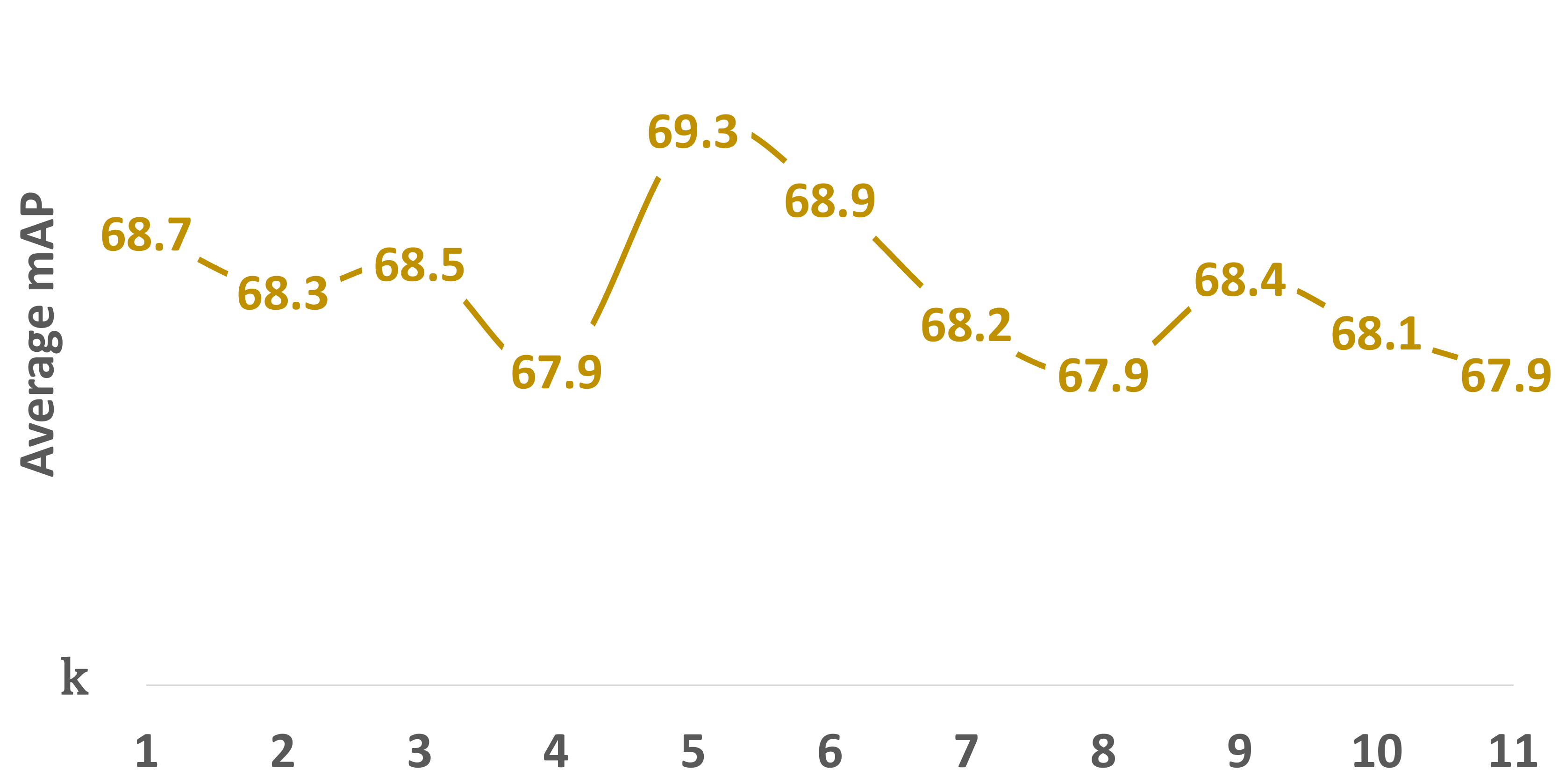}
    }

  \caption{Effectiveness of window size w and k.}
  \label{fig:statics}
\end{figure}

\myPara{Computational complexity.}
We compare the computational complexity and latency of \name~with the recent ActionFormer\cite{zhang2022actionformer}, which brings a large improvement to TAD by introducing the Transformer-based feature pyramid.

As shown in \tabref{table:time}, we divide the detector into two parts: the main architecture and the detection heads (\eg~classification head and regression head). We report the GMACs for each part and the inference latency (average over five times) on THUMOS14 dataset using an input with the shape $2304 \times 2048$, following the \cite{zhang2022actionformer}. We also report our results using the Trident-head and the normal regression head, respectively. First, from the first row, we see that GMACs of our main architecture with SGP layer is only $47.1\%$ of the ActionFormer (14.5 versus 30.8), and the overall latency is only $65.2\%$ (146ms versus 224ms), but \name~still outperforms Actionformer by $1.5\%$ average mAP, which shows that our main architecture is much better than the local Transformer-based method. Besides, we further evaluate our method with Trident-head. The experimental result shows that our framework can be improved by the Trident-head which further brings $1.0\%$ average mAP improvement and the GMACs is still $1.6$G smaller than ActionFormer, and the latency is still only $74.6\%$ of it, proving the high efficiency of our method. 

\begin{table}[t]
\centering{
\caption{Analysis of the number of feature pyramid layers.}
\label{table:fpn}
\setlength{\tabcolsep}{1.3mm}
\renewcommand{\arraystretch}{1.1}
\scalebox{0.9}{
\begin{tabular}{c |c | c  c c |c |c c c}
\toprule
$\#$Levels & Bin & 0.3  & 0.7 & Avg. & Bin & 0.3  & 0.7 & Avg. \\
\midrule
1 & \multirow{7}*{\tabincell{c}{16}} & 70.1 & 15.3 & 44.5 & 512 & 74.2 & 25.9 & 53.5 \\
2 & & 77.9 & 27.8 & 57.1 & 256 & 78.0 & 29.7 & 58.0\\
3 & & 79.8 & 37.7 & 61.8 & 128 & 80.6 & 37.1 & 62.5\\
4 & & 82.1 & 42.6 & 66.1 & 64 & 82.7 & 39.0 & 64.7\\
5 & & 82.9 & 45.7 & 68.1 & 32 & 82.7 & 44.7 & 67.4\\
6 & & \textbf{83.6}  & \textbf{47.4} & \textbf{69.3} & 16
& \textbf{83.6} & \textbf{47.4} & \textbf{69.3}\\
7 & & 83.4 & 46.2 & 68.9 & 8 & 82.7 & 46.8 & 68.2 \\


\bottomrule
\end{tabular}
}}
\end{table}


\myPara{Ablation on the window size in SGP layer.}
In this section, we study the effectiveness of the two hyper-parameters related to the window size in the SGP layer. Firstly, we fix $k=1$ and vary $w$. Secondly, we fix the value of $w=1$ and change $k$. Finally, we present the results in \figref{fig:statics} on THUMOS14 datasets. We find that different choices of $w$ and $k$ produce stable results on both datasets. The optimal values are $w=1$, $k=5$ for THUMOS14.

\myPara{The effectiveness of feature pyramid level.}
To study the effectiveness of the feature pyramid and its relation with the number of Trident-head bin set, we start the ablation from the feature pyramid with 16 bins and 6 levels. We conduct two sets of experiments: a fixed number of bins or a scaled number of bins for each level in the feature pyramid. 
As shown in \tabref{table:fpn}, we can see that the detection performance rises as the number of layers increases. With fewer levels (\ie~level less than 3), more bins bring better performance. That is because the fewer the number of levels, the more bins are needed to predict the action with a long duration (\ie~higher resolution at the highest level). We achieve the best result with a level number of 6.  

\begin{table}[t]
\centering{
\caption{Analysis of the number of bins.}
\label{table:bin}
\setlength{\tabcolsep}{1.3mm}
\renewcommand{\arraystretch}{1.3}
\scalebox{0.9}{
\begin{tabular}{c |c c c c | c  c c c }
\toprule
\multirow{2}*{\tabincell{c}{Bin}} & \multicolumn{4}{c|}{THUMOS14} & \multicolumn{4}{c}{HACS} \\
 & 0.3 & 0.5 & 0.7 & Avg. & 0.5 & 0.75 & 0.95 & Avg.\\
\midrule
4  & 82.9 & 71.5 & 46.3 & 68.1 & 55.7 & 32.3 & 4.7 & 33.3 \\
8  & 83.5 & \textbf{72.9} & 46.3 & 69.0 & 56.2 & 38.4 & 11.2 & 38.0 \\
10 & 82.8 & 71.8 & 46.2 & 68.1 & 56.2 & 38.5 & 11.1 & 37.9 \\
12 & \textbf{83.6} & 72.3 & 46.2 & 68.5 & 56.3 & 38.4 & 11.1 & 38.0 \\
14 & 83.4 & 72.6 & 45.6 & 68.3 & \textbf{56.7} & \textbf{39.3} & \textbf{11.7} &  \textbf{38.6} \\
16 & \textbf{83.6} & \textbf{72.9} & \textbf{47.4} & \textbf{69.3} & 56.5 & 38.6 & 11.1 & 38.1 \\
20 & \textbf{83.6} & 71.7 & 45.8 & 68.3 & 56.3 & 38.6 & 11.1 & 38.0 \\
\bottomrule
\end{tabular}}
}
\end{table}

\myPara{Ablation on the number of bins.}
In this section, we present the ablation results for the choice of the number of bins on the THUMOS14 and HACS datasets in \tabref{table:bin}. We observe the optimal value is obtained at 16 and 14 on the THUMOS14 and the HACS, respectively. We also find that a small bin value leads to significant performance degradation on HACS but not on THUMOS14. That is because the THUMOS14 dataset aims at detecting a large number of action segments from a long video and a small bin value can meet the requirements, but on HACS, there are more actions with long duration, thus a larger number of bins is needed. 




\section{Conclusion}
In this paper, we aim at improving the temporal action detection task with a simple one-stage convolutional-based framework TriDet with relative boundary modeling. Experiments conducted on THUMOS14, HACS, EPIC-KITCHEN and ActivityNet demonstrate a high generalization capability of our method, which achieves state-of-the-art performance on the first three datasets and comparable results on ActivityNet. Extensive ablation studies are conducted to verify the effectiveness of each proposed component.

\noindent\textbf{Acknowledgement.} 
This work is supported by the National Natural Science Foundation of China under Grant 62132002.

{\small
\bibliographystyle{ieee_fullname}
\bibliography{egbib}
}
\clearpage
\appendix

\section{Supplementary Material}
\subsection{Network Architecture in Feature Pyramid}

\label{sec:self2cnn}
\paragraph{From Transformer to CNN.}
To be self-contained, we analyze the impact of module design on the detector. 
For comparison, we build two baseline models: a convolutional baseline and a Transformer baseline. Firstly, we build the convolution baseline where the convolutional module is adopted from the previous one-stage detector~\cite{lin2021learning,zhang2022actionformer}. Secondly, the previous state-of-the-art detector~\cite{zhang2022actionformer} with the local window self-attention~\cite{beltagy2020longformer} is chosen as the Transformer baseline. Then, to analyze the importance of two common components: self-attention and normalization, in the Transformer~\cite{vaswani2017attention} macrostructure, we provide three variants of the convolutional-based structure: SA-to-CNN, LN-to-GN and LN-GN-Mix, as~\figref{fig:self2cnn} shown, and validate their performance on THUMOS14. 

\paragraph{Results.}
From the~\tabref{table:self2cnn}, we can see there is a large performance gap between the Transformer baseline and the CNN baseline (about $8.1\%$ in average mAP), demonstrating that the Transformer holds a large advantage for TAD tasks. 
Then, we conduct the ablation study with the three variants with normal regression head and Trident-head, respectively.

We first simply replace the local self-attention with a 1D convolutional layer which has the same receptive field with~\cite{zhang2022actionformer} (\eg~kernel size is 19). This change brings a dramatic performance increase in average mAP compared with the CNN baseline (about $6.2\%$) but is still behind the Transformer baseline by about $1.9\%$. Next, we conduct experiments with different normalization layers (\ie~Layer Normalization (LN)~\cite{ba2016layer} and Group Normalization (GN)~\cite{wu2018group}) and we find that the hybrid structure of LN and GN (LN-GN-Mix) shows better performance comparing to the original form of the Transformer ($65.7$ versus $64.9$). 
By combining with the Trident-head, the LN-GN-Mix version achieves $66.0\%$ in average mAP, which demonstrates the possibility of efficient convolutional modeling. These empirical results further motivate us to improve the feature pyramid with SGP layer (see Sec 3.2 of the main test for more details).

\begin{figure}[t!]
    \centering{
    \includegraphics[width=0.9\linewidth]{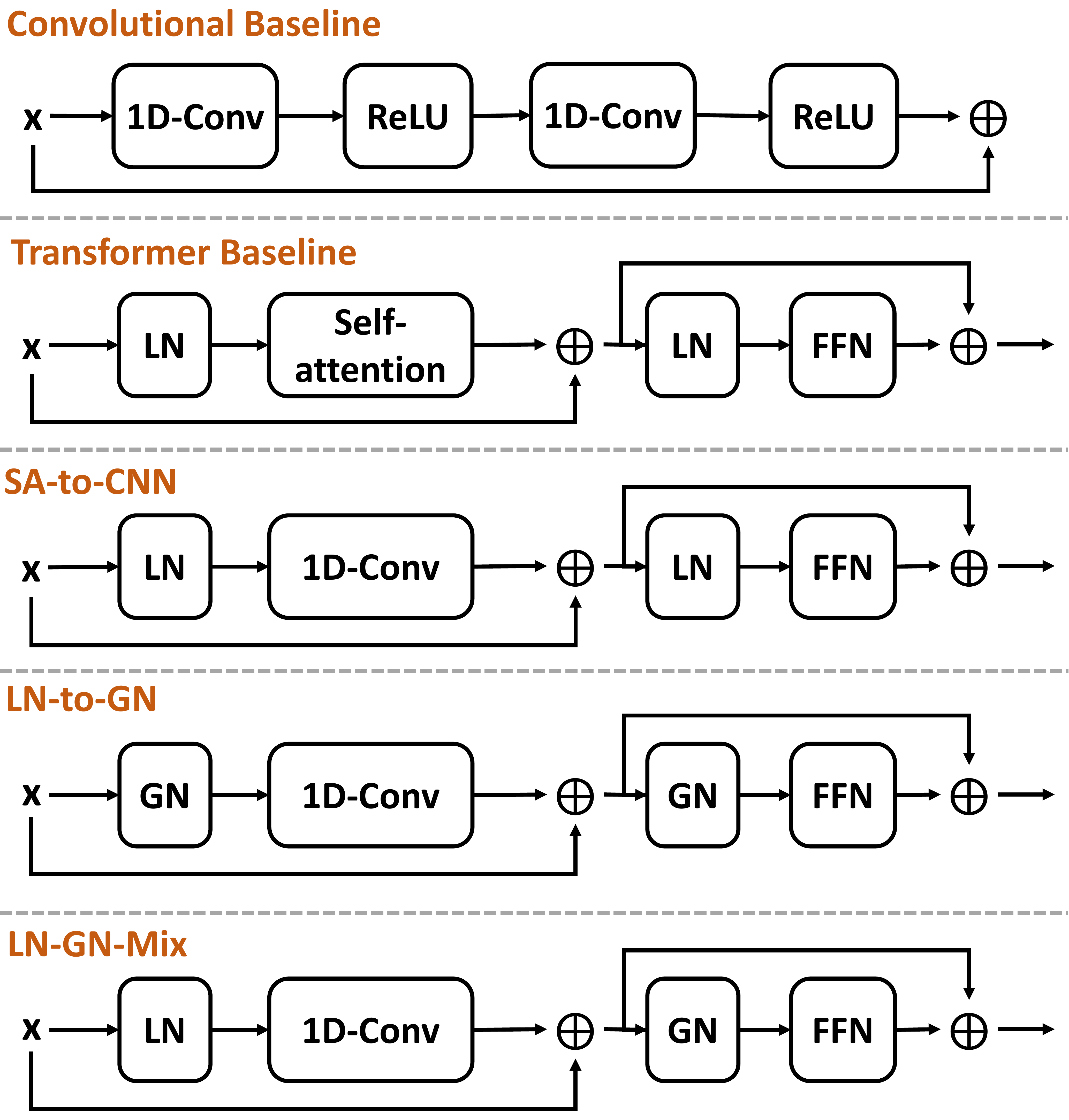}
    
  \caption{Two baseline models and three different variants of the convolutional-based structure. }
  \label{fig:self2cnn} 
  }
\end{figure}

\begin{table}[t]
\centering{
\caption{The results of different variants on THUMOS14. *: with Trident-head.}
\label{table:self2cnn}
\setlength{\tabcolsep}{1.3mm}
\begin{tabular}{c|c c c c c c }
\toprule
Method & 0.3 & 0.4 & 0.5 & 0.6 & 0.7 & Avg.\\
\midrule
 CNN Baseline & 73.7 & 68.8 & 61.4 & 51.6 & 38.0 & 58.7 \\
 Transformer Baseline & 82.1 & 77.8 & 71.0 & 59.4 & 43.9 & 66.8 \\
 \midrule
 SA-to-CNN & 80.4 & 76.4 & 67.5 & 57.5 & 42.9 & 64.9 \\
 LN-to-GN & 80.0 & 76.3 & 68.0 & 57.2 & 42.3 & 64.8 \\
 LN-GN-Mix & 80.8 & 77.2 & 68.8 & 58.1 & 43.6 & 65.7 \\
 \midrule
  SA-to-CNN* & 81.2 & 77.3 & 68.7 & 58.0 & 43.5 & 65.7 \\
 LN-to-GN* & 80.7 & 76.9 & 69.1 & 58.0 & 42.2 & 65.4 \\
 LN-GN-Mix* & 81.6 & 77.7 & 69.5 & 58.2 & 42.9 & 66.0 \\
\bottomrule
\end{tabular}
}
\end{table}

\subsection{The rank loss problem in Transformer.}
In~\cite{dong2021attention}, the authors discuss how the pure self-attention operation causes the input feature to converge to a rank-1 matrix at a double exponential rate, while MLP and residual connections can only partially slow down this convergence. This phenomenon is disastrous for TAD tasks, as the video feature sequences extracted by pre-trained action recognition networks are often highly similar (see Section 1), which further aggravates the rank loss problem and makes the features at each instant indistinguishable, resulting in inaccurate detection of action.

We posit that the core reason for this issue lies in the softmax function used in self-attention. Namely, the probability matrix (\ie~softmax($QK^T$)) is \emph{non-negative} and \emph{the sum of each row is 1}, indicating the outputs of SA are \emph{convex combination} for the value feature $V$. We will demonstrate that the largest angle between any two features in $V' = SA(V)$ is always less than or equal to the largest angle between features in $V$.

\begin{definition}[Convex Combination]
Given a set of points $S=\{x_1, x_2..., x_n\}$, a convex combination is a point of the form $\sum_{n}{a_nx_n}$, where $a_n\geq0$ and $\sum_{n}{a_n}=1$.
\end{definition}

\begin{definition}[Convex Hull]
The convex hull $H$ of a given set of points $S$ is identical to the set of all their convex combinations. A Convex hull is a convex set.
\end{definition}

\begin{property}[Extreme point]
An extreme point $p$ is a point in the set that does not lie on any open line segment between any other two points of the same set. For a point set $S$ and its convex hull $H$, we have $p\in S$.
\end{property}

\begin{lemma} 
\label{lemma:two}
Consider the case of a convex hull that does not contain the origin.
Let $a, b \in \mathbb{R}^n$ and let $S$ be the convex hull formed by them. Then, the angle between any two position vectors of points in $S$ is less than or equal to the angle between the position vectors of the extreme points $\vec{a}$ and $\vec{b}$.
\end{lemma}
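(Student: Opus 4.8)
\textbf{Proof proposal for Lemma~\ref{lemma:two}.}

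The plan is to reduce the two-dimensional (in the sense of two generating points) statement to an elementary planar fact and then handle the degenerate configurations separately. First I would observe that every point of $S$ lies on the segment $[a,b]$, so it can be written as $p_\lambda = \lambda a + (1-\lambda) b$ with $\lambda \in [0,1]$; the extreme points are $p_1 = a$ and $p_0 = b$. All position vectors $\vec{a}$, $\vec{b}$, $\vec{p}_\lambda$ lie in the linear span of $\vec{a}$ and $\vec{b}$, which is at most a $2$-plane through the origin, so without loss of generality we may work in $\mathbb{R}^2$. The claim is then: for any $\lambda,\mu \in [0,1]$, the angle between $\vec{p}_\lambda$ and $\vec{p}_\mu$ is at most the angle between $\vec{a}$ and $\vec{b}$.

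Next I would make the geometric picture precise. Since the convex hull $S$ (the segment $[a,b]$) does not contain the origin $O$, the two rays $O\vec{a}$ and $O\vec{b}$ bound a convex angular sector of opening $\theta = \angle(\vec{a},\vec{b}) < \pi$, and the whole segment $[a,b]$ lies inside this sector: indeed any $p_\lambda$ is a convex combination of $a$ and $b$, hence lies in the convex cone they generate, and it is nonzero because $O \notin [a,b]$. Therefore the ray $O\vec{p}_\lambda$ lies in the closed sector spanned by $O\vec{a}$ and $O\vec{b}$ for every $\lambda$. Once both $O\vec{p}_\lambda$ and $O\vec{p}_\mu$ lie in a common sector of opening $\theta<\pi$, the angle between them is at most $\theta$ — this is the elementary fact that the angular coordinate, restricted to a sector of opening less than $\pi$, is order-preserving and the diameter of the sector in angle is exactly $\theta$. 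I would formalize this by choosing an angular parametrization $\arg(\cdot)$ on the sector (well-defined and continuous there since the sector avoids the origin and has opening $<\pi$), noting $\arg \vec{p}_\lambda$ is a continuous function of $\lambda$ taking values in $[\arg\vec a, \arg\vec b]$ at the endpoints and staying in that interval by the cone-containment above, and concluding $|\arg\vec p_\lambda - \arg\vec p_\mu| \le |\arg\vec a - \arg\vec b| = \theta$.

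The main obstacle — and the place where the hypothesis "$O \notin H$" is essential — is ensuring that every $\vec{p}_\lambda$ is a genuine nonzero vector lying in the open sector, so that "angle" is well-defined and the cone argument applies; if the origin were on or beyond the segment, some $\vec p_\lambda$ could vanish or the ray structure could wrap past $\pi$ and the inequality would fail. I would therefore spend the bulk of the argument on this point: $O\notin[a,b]$ gives $p_\lambda \neq 0$ for all $\lambda$, and convexity of the cone generated by $\vec a,\vec b$ (together with $\theta<\pi$, which holds since two distinct rays from a point not on the segment subtend less than a straight angle) gives the containment. A secondary routine case to dispatch is when $a$ and $b$ are positive scalar multiples of each other (so $\theta = 0$): then all $p_\lambda$ are positive multiples of a common vector and every pairwise angle is $0$, so the inequality holds trivially. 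With these degeneracies handled, the order-preserving-angle step closes the proof.
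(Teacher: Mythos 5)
Your proof is correct, and it takes a genuinely different route from the paper's. The paper argues analytically, directly in $\mathbb{R}^n$: it fixes $\vec{y}=\vec{a}+y\vec{b}$, parametrizes the other point as $\vec{x}=\vec{a}+x\vec{b}$ with $x\in[0,\infty)$ (exploiting scale-invariance of angles), differentiates $f(x)=\cos(\vec{x},\vec{y})$, and shows that the sign-determining factor of $f'(x)$ equals $\left(\langle\vec{a},\vec{b}\rangle^2-\|\vec{a}\|^2\|\vec{b}\|^2\right)(x-y)$, which by Cauchy--Schwarz makes $f$ unimodal with its minimum (largest angle) attained only at the extreme points $x=0$ or $x\to\infty$; applying this extremality twice yields $\theta(\vec{x},\vec{y})\le\theta(\vec{a},\vec{b})$. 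You instead reduce to the plane spanned by $\vec{a}$ and $\vec{b}$ and argue synthetically: the hypothesis $O\notin[a,b]$ rules out both $p_\lambda=0$ and antiparallel $\vec{a},\vec{b}$, so the segment sits inside a convex angular sector of opening $\theta<\pi$, on which an angular coordinate is single-valued and immediately bounds every pairwise angle by $\theta$. Your version is more elementary (no calculus) and makes the role of the origin hypothesis transparent --- it is precisely what keeps the sector opening below $\pi$ and every position vector nonzero, which is exactly where the statement would otherwise fail. What the paper's computation buys is a two-point monotonicity fact stated with no dimension reduction, in the form that is reused verbatim in the induction step of the subsequent $k$-point theorem (peeling off one generator at a time); your sector argument would also extend there, since the convex hull of $k$ points still lies in the convex cone of its generators, but the paper does not organize the general case that way. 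Both arguments must dispatch the collinear degenerate case: you do so explicitly, while in the paper's computation it corresponds to the factor vanishing identically and causes no harm.
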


\begin{proof}
Consider the objective function
\begin{align*}
    f(x) = \cos{(\vec{x},\vec{y})} = \frac{\langle\vec{x},\vec{y}\rangle}{\left\lVert\vec{x}\right\rVert_2\left\lVert\vec{y}\right\rVert_2},
\end{align*}
where $\vec{x},\vec{y}$ are the position vectors of two points $x_1, x_2$ within the convex hull $S$ (a line segment with extreme points $a$ and $b$). The angle between two vectors is invariant with respect to the magnitude of the vectors, thus, for simplicity, we define $\vec{x}=\vec{a}+x\vec{b}$, $\vec{y}=\vec{a}+y\vec{b}$, where $x,y \in[0,+\infty)$.
Moreover, we have 
\begin{align*}
    f'(x) =&\left\lVert\vec{x}\right\rVert^{-3}_2 \left\lVert\vec{y}\right\rVert^{-1}_2 \times\\
    &[{\langle\Vec{b},\vec{y}\rangle||\vec{a}+x\vec{b}||_2^2-(||\vec{b}||^{2}_2x+\langle\vec{a},\vec{b}\rangle)\langle\vec{a}+x\vec{b},\vec{y}\rangle}]
\end{align*}
We consider 
\begin{align*}
    g(x)=&\langle\Vec{b},\vec{y}\rangle||\vec{a}+x\vec{b}||_2^2-(||\vec{b}||^{2}_2x+\langle\vec{a},\vec{b}\rangle)\langle\vec{a}+x\vec{b},\vec{y}\rangle \\
    =&\langle\vec{b},\vec{y}\rangle(||\vec{a}||_2^2+2\langle\vec{a},\vec{b}\rangle x+||\vec{b}||_2^2x^2)-[\langle\vec{b},\vec{y}\rangle||b||_2^2x^2\\
    &+(\langle\vec{a},\vec{b}\rangle||b||_2^2+\langle\vec{a},\vec{b}\rangle\langle\vec{b},\vec{y}\rangle)x+\langle\vec{a},\vec{y}\rangle\langle\vec{a},\vec{b}\rangle]\\
    =&(\langle\vec{a},\vec{b}\rangle\langle\vec{b},\vec{y}\rangle-\langle\vec{a},\vec{y}\rangle\langle\vec{b},\vec{b}\rangle)x + \langle\vec{a},\vec{a}\rangle\langle\vec{b},\vec{y}\rangle-\langle\vec{a},\vec{y}\rangle\langle\vec{a},\vec{b}\rangle.
\end{align*}
Substituting $\vec{y}=\vec{a}+y\vec{b}$ into the above equation, we have 
\begin{align*}
    g(x)=&(\langle\vec{a},\vec{b}\rangle\langle\vec{b},\vec{a}+y\vec{b}\rangle-\langle\vec{a},\vec{a}+y\vec{b}\rangle\langle\vec{b},\vec{b}\rangle)x + \\
    &\langle\vec{a},\vec{a}\rangle\langle\vec{b},\vec{a}+y\vec{b}\rangle-\langle\vec{a},\vec{a}+y\vec{b}\rangle\langle\vec{a},\vec{b}\rangle\\
    =&[\langle\vec{a},\vec{b}\rangle(\langle\vec{a},\vec{b}\rangle+y\langle\vec{b},\vec{b}\rangle)-(\langle\vec{a},\vec{a}\rangle+y\langle\vec{a},\vec{b}\rangle)\langle\vec{b},\vec{b}\rangle]x +\\
    & [\langle\vec{a},\vec{a}\rangle(\langle\vec{a},\vec{b}\rangle+y\langle\vec{b},\vec{b}\rangle)-(\langle\vec{a},\vec{a}\rangle+y\langle\vec{a},\vec{b}\rangle)\langle\vec{a},\vec{b}\rangle]\\
    =&(||\langle\vec{a},\vec{b}\rangle||_2^2-||\vec{a}||_2^2||\vec{b}||_2^2)x+(||\vec{a}||_2^2||\vec{b}||_2^2-||\langle\vec{a},\vec{b}\rangle||_2^2)y\\
    =&(||\langle\vec{a},\vec{b}\rangle||_2^2-||\vec{a}||_2^2||\vec{b}||_2^2)(x-y).
\end{align*}
According to the Cauchy-Schwarz inequality, we can obtain 
\begin{align*}
    ||\langle\vec{a},\vec{b}\rangle||_2^2-||\vec{a}||_2^2||\vec{b}||_2^2\leq0
\end{align*}
Then, we have
\begin{equation*}
g(x)\left\{
\begin{aligned}
>0 & &x<y\\
=0 & &x=y \\
<0 & &x>y.
\end{aligned}
\right.    
\end{equation*}
thus, for any position vector $\vec{y}$, when $x=0$ or $x\rightarrow \infty$ ($\Vec{x} =\vec{a}$ or $\Vec{x} =\vec{b}$), the angle formed between $\Vec{y}$ and $\vec{x}$ is maximum.

Without loss of generality, given a specific $\vec{y}$, if its maximum vector $\vec{x}=\vec{a}$, we can then set $\vec{y}$ to $\vec{a}$ and find its maximum vector again, which yields
\begin{equation*}    \theta(\vec{x},\vec{y})\leq\theta(\vec{a},\vec{y})\leq\theta(\vec{b},\vec{a})
\end{equation*}
The proof is completed.
\end{proof}

\begin{theorem}
\label{theo:gen}
Consider the case of a convex hull that does not contain the origin. Let $X = \{x_1, x_2, \dots, x_k\}$ be a set of points and let $S$ be its convex hull. Then, the maximum angle between the position vectors of any two points in $S$ is formed by the position vectors of two extreme points of $S$.
\end{theorem}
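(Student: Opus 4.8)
The plan is to reduce Theorem~\ref{theo:gen} (the general case of finitely many points) to Lemma~\ref{lemma:two} (the two-point case) by an induction argument on the number of extreme points, combined with a reduction of an arbitrary pair of points in the convex hull to a pair lying on a single edge. First I would recall the structural facts already available: the convex hull $S$ of $X$ is a polytope whose extreme points form a subset $E\subseteq X$ (by the Extreme Point property), and every point of $S$ is a convex combination of points of $E$. Fix two arbitrary points $x,y\in S$; the goal is to show $\theta(\vec{x},\vec{y})\le \theta(\vec{p},\vec{q})$ for some extreme points $p,q\in E$.

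The key step is the following ``pushing to the boundary'' argument, which generalizes the monotonicity computation inside the proof of Lemma~\ref{lemma:two}. Hold $\vec{y}$ fixed and consider moving $x$ inside $S$; I want to argue that $\cos\angle(\vec{x},\vec{y})$ attains its minimum (i.e.\ the angle attains its maximum) at an extreme point of $S$. To see this, note that any $x\in S$ lies on a line segment through $S$; restricting $\cos\angle(\cdot,\vec{y})$ to that segment, the computation of $g(x)$ in Lemma~\ref{lemma:two} shows the function is monotone along the segment (the sign of $g$ is constant once $y$ is fixed, since it is an affine function of the segment parameter that changes sign only at the fixed point corresponding to $\vec{y}$'s direction — and more precisely, for a fixed target direction the restriction of $f$ to any segment has a derivative whose sign equals that of a single affine function, so $f$ has no interior local minimum). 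Hence the minimum of $\cos\angle(\vec{x},\vec{y})$ over the segment occurs at an endpoint; iterating (moving from $x$ to a vertex of the face it lands on, then along an edge of that face, and so on, decreasing dimension each time) pushes $x$ to an extreme point $p\in E$ with $\theta(\vec{x},\vec{y})\le\theta(\vec{p},\vec{y})$. Then I symmetrically hold $\vec{p}$ fixed and push $y$ to an extreme point $q\in E$, obtaining $\theta(\vec{x},\vec{y})\le\theta(\vec{p},\vec{y})\le\theta(\vec{p},\vec{q})$, which is exactly the claim.

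A cleaner alternative that avoids the dimension-reduction bookkeeping: given $x,y\in S$, both are convex combinations of extreme points, so the segment $[x,y]$ is contained in $S$; extend it maximally to a chord of $S$ with endpoints $x',y'$ on $\partial S$. By Lemma~\ref{lemma:two} applied to the segment $[x',y']$ (whose ``extreme points'' are $x'$ and $y'$), $\theta(\vec{x},\vec{y})\le\theta(\vec{x'},\vec{y'})$. Now $x'$ lies on some proper face $F$ of $S$, which is itself the convex hull of a strictly smaller set of extreme points; by induction on $|E|$ (base case $|E|=2$ being Lemma~\ref{lemma:two}), the angle from $\vec{x'}$ to any point — in particular to $\vec{y'}$ if it also lies in $F$, or after one more reduction otherwise — is bounded by an angle between extreme points of $F\subseteq E$. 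Handling the case where $x'$ and $y'$ lie on different faces requires one extra application of the pushing argument to bring one of them to a shared lower-dimensional face or to an extreme point directly.

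The main obstacle I anticipate is making the ``push each endpoint to an extreme point independently'' step fully rigorous: when I move $x$ to an extreme point while holding $y$ fixed, and then move $y$ while holding the new $x$ fixed, I must be sure the monotonicity direction from Lemma~\ref{lemma:two} genuinely applies at each stage — i.e.\ that for a fixed reference direction $\vec{y}$, the cosine is monotone (no interior minimum) along \emph{every} segment in $S$, not merely along the one edge considered in the lemma. This is really the content of the sign analysis of $g(x)$, and the delicate point is that $g$'s sign depends on the relative positions $x\lessgtr y$ along a common parametrization, so I need to set up the parametrization of an arbitrary chord consistently (writing points as $\vec{a}+t\vec{b}$ with a shared basepoint). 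Once that monotonicity-along-any-chord statement is established, the rest is routine: finitely many reductions, each weakly increasing the angle, terminate at a pair of extreme points.
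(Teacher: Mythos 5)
Your strategy is the same as the paper's: reduce to Lemma~\ref{lemma:two} by holding one point fixed and pushing the other along segments to an extreme point, then repeating symmetrically. But the obstacle you flag at the end --- whether, for a \emph{fixed} reference vector $\vec{y}$, the cosine $\cos\angle(\vec{x},\vec{y})$ has no interior minimum along an arbitrary segment in $S$ --- is not a bookkeeping issue that can be patched; it is false once $\vec{y}$ leaves the plane spanned by the segment's endpoints. The sign analysis in Lemma~\ref{lemma:two} only shows that $f'$ vanishes at most once on the segment (because $g$ is affine in the segment parameter); whether that critical point is a maximum or a minimum of the cosine depends on where $\vec{y}$ sits, and the clean factorization $g(x)=(\|\langle\vec a,\vec b\rangle\|_2^2-\|\vec a\|_2^2\|\vec b\|_2^2)(x-y)$ is obtained only after substituting $\vec y=\vec a+y\vec b$, i.e.\ it requires $\vec y$ to lie on the same segment. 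For an off-segment $\vec y$ the cosine can dip to an interior minimum, so the angle is maximized strictly inside the segment and your ``push to the boundary'' step fails.

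Concretely, take $a=(1,1,0)$, $b=(-1,1,0)$, $y=(0,-1,1)$ and $S=\mathrm{conv}\{a,b,y\}$. With $u=(0,\tfrac{1}{10},1)$ one has $\langle u,a\rangle,\langle u,b\rangle,\langle u,y\rangle>0$, so $S$ does not contain the origin. The angles between extreme points are $\theta(\vec a,\vec b)=90^{\circ}$ and $\theta(\vec a,\vec y)=\theta(\vec b,\vec y)=120^{\circ}$, yet the midpoint $m=\tfrac12(a+b)=(0,1,0)\in S$ satisfies $\cos\theta(\vec m,\vec y)=-1/\sqrt{2}$, i.e.\ $\theta(\vec m,\vec y)=135^{\circ}$. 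So the maximum angle over $S$ is \emph{not} attained at a pair of extreme points: Theorem~\ref{theo:gen} is false as stated in $\mathbb{R}^n$ for $n\ge 3$ (and the Corollary inherits the problem). Your proof breaks exactly at the step you identified, and the paper's own proof makes the same invalid move when it deduces $\theta(\vec x,\vec y)\le\theta(\vec{x_k},\vec y)$ from Lemma~\ref{lemma:two} with $\vec y$ off the segment joining $\vec x$ and $\vec{x_k}$. The argument is sound in the plane, where all position-vector directions of $S$ lie in an open arc of less than $180^{\circ}$ and the angular coordinate is monotone along every segment; in higher dimensions one needs a genuinely stronger hypothesis (e.g.\ all pairwise angles acute, or the reference vector confined to the span of the segment), not a more careful induction.
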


\begin{proof}
Assume that this case holds when k.

When $k=2$, based on Lemma \ref{lemma:two}, the maximum angle is formed by the extreme points $\vec{x_1}$ and $\vec{x_2}$.

When $k\geq3$, we can sort the elements of X such that for a point $y$ in $S$, $\vec{x_k}$ maximizes the angle $\theta(\vec{y},\vec{x_k})$. Besides, the points $x$ in $S$ are of the form:
\begin{align*}
    &\lambda_1\vec{x_1}+\lambda_2\vec{x_2}+...+\lambda_k\vec{x_k}\\
    =&(\lambda_1+...+\lambda_{k-1})(\frac{\lambda_1 \vec{x_1}}{\lambda_1+...+\lambda_{k-1}}+...+\frac{\lambda_{k-1}\vec{x_{k-1}}}{\lambda_1+...+\lambda_{k-1}})\\
    &+\lambda_k\vec{x_k},
\end{align*}
where $(\frac{\lambda_1 \vec{x_1}}{\lambda_1+...+\lambda_{n-1}}+...+\frac{\lambda_{k-1}\vec{x_{k-1}}}{\lambda_1+...+\lambda_{k-1}})$ is a position vector of a point located within the convex hull induced by $\{x_1,x_2,...,x_{k-1}\}$. Through Lemma~\ref{lemma:two} and definition, we can obtain
\begin{align*}
    \theta(\vec{x},\vec{y})\leq\theta(\vec{x_k},\vec{y})
\end{align*}
For any two points x and y in a convex hull S, by setting $\vec{y}=\Vec{x_k}$ and using the above inequality twice, without loss of generality, we can assume that the vector $\vec{x_1}$ makes the largest angle with $\vec{x_k}$. Then, we can obtain
    \begin{align*}
    \theta(\vec{x},\vec{y})\leq\theta(\vec{x_k},\vec{y})\leq\theta(\vec{x_1},\vec{x_k})
\end{align*}

By definition, $\theta(\vec{x_1},\vec{x_k})$ is no greater than the maximum angle formed by any other two basis vectors.

The proof is completed.
\end{proof}

\begin{corollary}
When the convex hull of the input set $V$ does not contain the origin, the largest angle between any two features after self-attention $V' = SA(V)$ is always less than or equal to the largest angle between features in $V$.
\end{corollary}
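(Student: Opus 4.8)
The plan is to reduce the Corollary to Theorem~\ref{theo:gen} by observing that every output feature of self-attention lies in the convex hull of the input features. Write $V' = SA(V) = A\,V$ with $A = \mathrm{softmax}(QK^T)$. By construction every entry of $A$ is non-negative and every row of $A$ sums to $1$, so the $i$-th output feature is $v'_i = \sum_j A_{ij}\,v_j$ with $A_{ij}\ge 0$ and $\sum_j A_{ij}=1$, which is precisely a convex combination of the rows $v_1,\dots,v_n$ of $V$. Hence $v'_i\in S$ for every $i$, where $S$ denotes the convex hull of $\{v_1,\dots,v_n\}$; in other words the whole set of output features is contained in $S$. Since, by hypothesis, $0\notin S$, all position vectors involved are non-zero and the angles below are well defined.

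Next I would invoke Theorem~\ref{theo:gen}: because $S$ is the convex hull of the finite point set $\{v_1,\dots,v_n\}$ and does not contain the origin, the maximum angle between the position vectors of any two points of $S$ is attained by the position vectors of two extreme points of $S$. By the Extreme point property, every extreme point of $S$ belongs to the original set $\{v_1,\dots,v_n\}$. Chaining these observations, for any two output features $v'_i,v'_j$ (both in $S$) we get
\[
\theta(\vec v'_i,\vec v'_j)\;\le\;\max_{p,q\in S}\theta(\vec p,\vec q)\;=\;\theta(\vec e_1,\vec e_2)\;\le\;\max_{k,l}\theta(\vec v_k,\vec v_l),
\]
where $e_1,e_2$ are the extreme points of $S$ realizing the maximum over $S$, and the final inequality holds because $e_1,e_2\in\{v_1,\dots,v_n\}$. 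Taking the maximum over $i,j$ on the left-hand side shows the largest angle among features of $V'$ is at most the largest angle among features of $V$, which is the claim.

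The conceptual content of this statement lives entirely in Lemma~\ref{lemma:two} and Theorem~\ref{theo:gen}; here the only points that need care are (i) checking that the row-stochastic structure of $\mathrm{softmax}(QK^T)$ really places each output in the convex hull of the inputs, and (ii) using the Extreme point property to descend from ``extreme points of the hull $S$'' back to ``original input features,'' so that the final bound is phrased in terms of $V$ rather than $S$. Neither step is difficult, but both are needed for the displayed chain of inequalities to be valid; no assumption beyond ``the convex hull of $V$ does not contain the origin'' is required, since that is exactly the hypothesis of Theorem~\ref{theo:gen}.
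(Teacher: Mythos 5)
Your proposal is correct and follows exactly the route the paper intends: the row-stochastic softmax matrix places every output feature in the convex hull of the inputs, and Theorem~\ref{theo:gen} together with the extreme-point property then bounds the maximum angle over $V'$ by the maximum angle over $V$. The paper states the corollary as an immediate consequence of the theorem without writing this chain out, so your argument simply makes the same reasoning explicit.
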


\begin{remark}
In the Temporal Action Detection (TAD) task, the temporal feature sequences extracted by the pre-trained video classification backbone often exhibit high similarity and the pure Layer Normalization~\cite{ba2016layer} projects the input features onto the hyper-sphere in the high-dimensional space. Consequently, the convex hull induced by these features often does not encompass the origin. As a result, self-attention operation causes the input features to become more similar, reducing the distinction between temporal features and hindering the performance of the TAD task.
\end{remark}

\begin{figure*}[t]
    \centering{
    \includegraphics[width=0.8\linewidth]{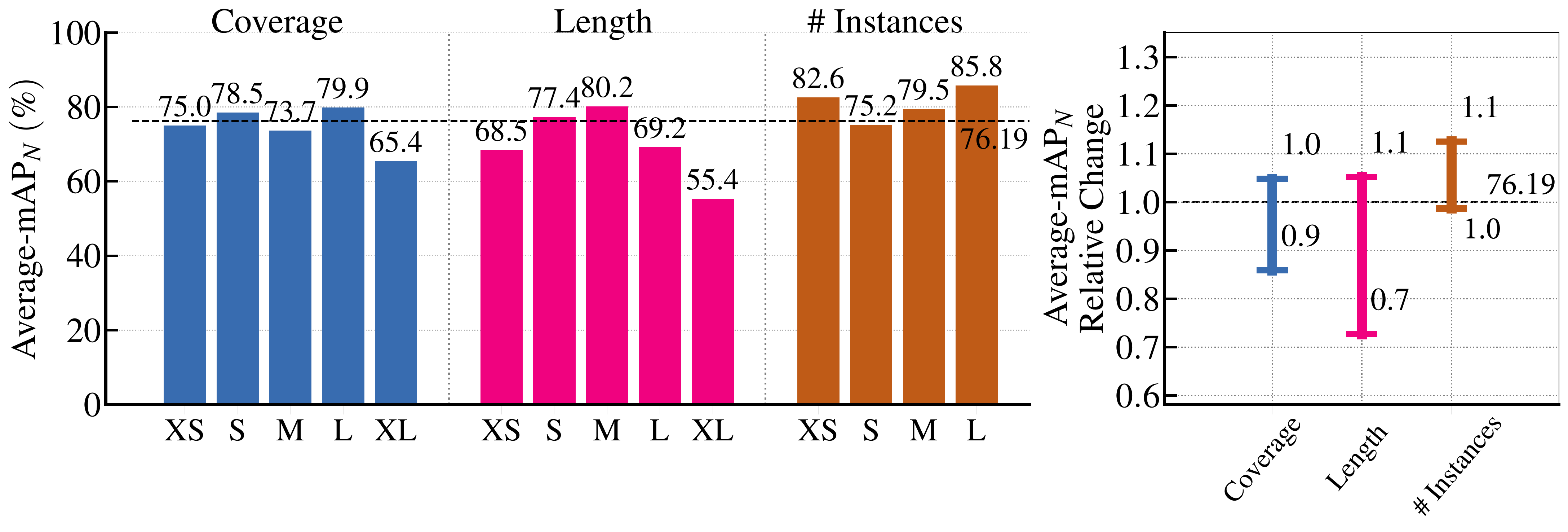}
    
  \caption{The sensitivity analysis of the detection results on THUMOS14. Where $mAP_{N}$ is the normalized mAP with the average number $N$ of ground truth segments per class~\cite{alwassel2018diagnosing}.}
  \label{fig:sensitivity} 
  }
\end{figure*}

\subsection{Error Analysis}
In this section, we analyse the detection results on THUMOS14 with the tool from ~\cite{alwassel2018diagnosing}, which analyze the results in three main directions: the False Positive (FP), the False Negative (FN) and the sensitivity of different length. For a further explanation of the analysis, please refer to ~\cite{alwassel2018diagnosing} for more details. 

\myPara{Sensitivity analysis.}
As shown in \figref{fig:sensitivity} (Left), three metrics are taken into consideration: coverage (the normalized length of the instance by the duration of the video), length (the actual length in seconds) and the number of instances (in a video). The results are divided into several length/number bins from extremely short (XS) to extremely long (XL). We can see that our method's performance is balanced over most of the action length, except for extremely long action instances which are significantly lower than the overall value (the dashed line). That's because extremely long action instances contain more complicated information, which deserves further exploration.

\begin{figure}[t!]
    \centering{
    \includegraphics[width=\linewidth]{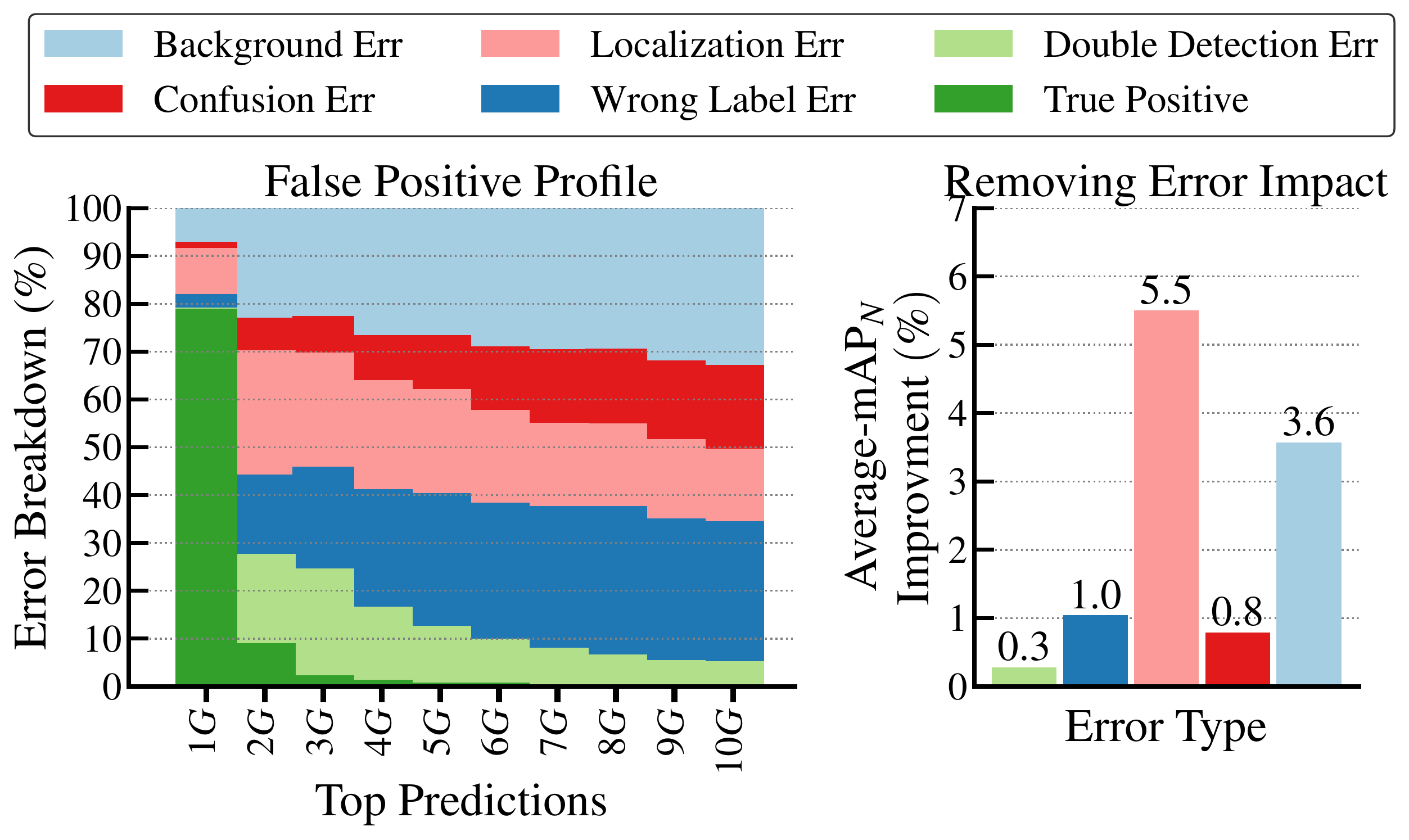}
    
  \caption{The false positive profile. It counts the percentage of several common types of detection error in different Top-K prediction groups.}
  \label{fig:fperror} 
  }
\end{figure}

\begin{figure}[t]
    \centering{
    \includegraphics[width=0.9\linewidth]{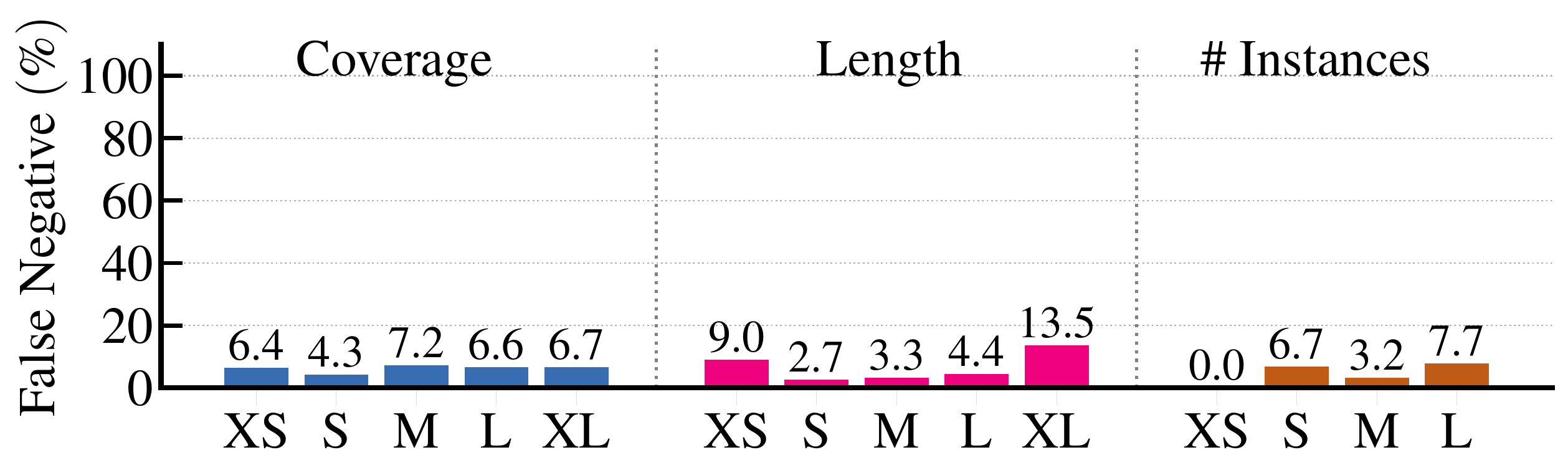}
    
  \caption{The false negative profile. It counts the percentage of miss-detection instances in different video lengths or videos with different action instance densities.}
  \label{fig:fnerror} 
  }
\end{figure}

\myPara{Analysis of the false positives.}
\figref{fig:fperror} shows a chart of the percentage of different types of action instances in different $k-G$ numbers, where G is the number of the ground-truth instances for each action category and the top $k \times G$ predicted instances are kept for visualization. 

From the $1G$ column on left, we can see in the top G prediction, the true positive instances account for about $80\%$ (at IoU=0.5), which indicates that our method has the power to estimate the right score of each instance. Moreover, on right, we can see the impact of each type of error: the regression error (\ie~ localization error and background error, the IoU between prediction and ground truth is much lower than a threshold or equal to zero) is still the part that deserves the most attention. 

\myPara{Analysis of the false negatives.}
In this section, we analyze the false negative (miss-detection) rate for our method. As depicted in \figref{fig:fnerror}, only the extremely short and extremely long instances have a relatively higher FN rate ($9.0\%$ and $13.5\%$, respectively), which is consistent with intuition that they are more difficult to detect. Note that for a video with only one action instance (XS), TriDet can detect all of them without any miss-detection ($0.0$ in \# Intances), demonstrating our advantage for single-action localization.

\begin{figure*}[t]
    \centering{
    \includegraphics[width=\linewidth]{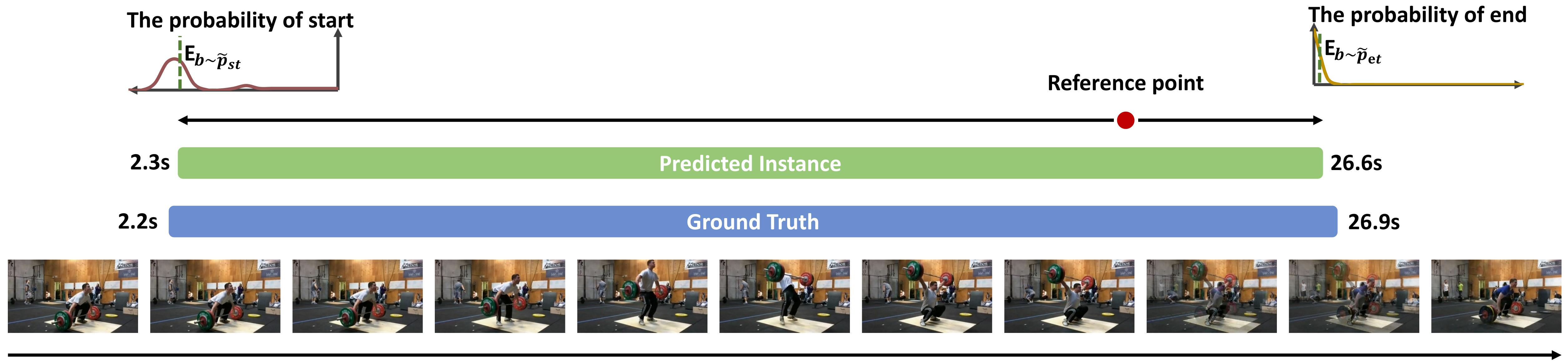}
  \caption{A visualization of the detection result on the THUMOS14 test set.}
  \label{fig:viz} 
  }
\end{figure*}

\subsection{Qualitative Analysis}
In \figref{fig:viz}, we show the visualization of a detection result on the THUMOS14 test set. It can be seen that our method accurately predicts the start and end instant of the action. Besides, we also visualize the predicted probability of the boundary in the Trident-head, where only the bin around the boundary has a relatively high probability while the others are low and smooth, indicating that the Trident-head can converge to a reasonable result.


\end{document}